\documentclass[journal]{IEEEtran}
\usepackage{generic}
\usepackage{cite}
\usepackage{amsmath,amssymb,amsfonts}
\usepackage{algorithmic}
\usepackage{graphicx}
\usepackage{algorithm,algorithmic}
\usepackage{hyperref}
\hypersetup{hidelinks=true}
\usepackage{textcomp}
\usepackage{multirow}
\usepackage{eso-pic}

\newtheorem{proposition}{Proposition}
\newtheorem{remark}{Remark}
\newtheorem{example}{Example}
\newenvironment{proof}{\noindent\textit{Proof.}}{\hfill$\square$}

\def\BibTeX{{\rm B\kern-.05em{\sc i\kern-.025em b}\kern-.08em
    T\kern-.1667em\lower.7ex\hbox{E}\kern-.125emX}}
\begin{document}
\title{Constructing VAE Latent Spaces with Prescribed Topology}
\author{Jilles S. van Hulst,
        Jakub M. Tomczak,
        W.P.M.H. (Maurice) Heemels,
        and Duarte J. Antunes
\thanks{J. S. van Hulst, W.P.M.H. Heemels, and D. J. Antunes are with the Control Systems Technology Section, Department of Mechanical Engineering, Eindhoven University of Technology, 5600 MB Eindhoven, The Netherlands. Corresponding author: {\tt j.s.v.hulst@tue.nl}.}%
\thanks{J. M. Tomczak is with the Nature Innovation Laboratory (NatInLab).}%
\thanks{The research is carried out as part of the ITEA4 20216 ASIMOV project. The ASIMOV activities are supported by the Netherlands Organisation for Applied Scientific Research TNO and the Dutch Ministry of Economic Affairs and Climate (project number: AI211006). The research leading to these results is partially funded by the German Federal Ministry of Education and Research (BMBF) within the project ASIMOV-D under grant agreement No. 01IS21022G [DLR], based on a decision of the German Bundestag.}%
}

\AddToShipoutPictureBG*{%
    \AtPageUpperLeft{%
        \setlength\unitlength{1in}%
        \hspace*{\dimexpr0.5\paperwidth\relax}
        \makebox(0,-0.8)[c]{%
            \begin{tabular}{c}
                J.S. van Hulst \emph{et al.}, ``Constructing VAE Latent Spaces with Prescribed Topology.''
                \copyright~The Authors.
            \end{tabular}%
        }%
    }%
}%

\maketitle

\begin{abstract}
Variational autoencoders (VAEs) learn low-dimensional latent representations of high-dimensional data. When the data lies on a manifold with non-Euclidean topology, the standard Gaussian prior introduces a topological mismatch that degrades reconstruction quality and prevents faithful representation. We present a constructive mathematical framework that resolves this mismatch for all manifolds that admit a product covering space. These are manifolds expressible as products of elementary factors (circles, intervals, or lines) or as quotients of such products by a finite symmetry group. The class includes cylinders, tori, M\"{o}bius strips, Klein bottles, and real projective spaces. Factorized distributions over the elementary factors yield product topologies with closed-form, decoupled KL divergences, so that each latent factor can be shaped independently while keeping training tractable. We catalogue reparametrizable encoder-prior pairs for periodic, bounded, and unbounded supports, and provide coordinate transformations that allow standard neural networks to output non-Euclidean parameters with smooth gradients. For quotient manifolds, the decoder receives group-invariant features of the covering-space coordinates, so that identified points produce identical outputs. Anchor constraints fix the coordinate system relative to the data or create soft topological holes. Experiments on synthetic manifolds and real-image datasets (rotated and cyclically shifted MNIST) confirm that a topology-matched prior aligns KL regularization with the data manifold. The resulting topology-aware models outperform the Gaussian baseline at all practically relevant regularization strengths. The code is available at \url{https://github.com/JvHulst/VAE-Topology}.
\end{abstract}

\begin{IEEEkeywords}
Variational autoencoders, latent space geometry, topology, KL divergence, reparameterization, representation learning
\end{IEEEkeywords}

\section{Introduction}
\label{sec:introduction}
\IEEEPARstart{V}{ariational} autoencoders (VAEs), introduced by Kingma and Welling~\cite{kingma2014}, have become a fundamental tool for learning low-dimensional representations of high-dimensional data. The main innovation of VAEs is the \textit{reparametrization trick}~\cite{rezende2014}, which expresses latent samples as deterministic transformations of encoder outputs and independent noise, enabling gradient-based optimization of the loss criterion.

A key motivation for dimensionality reduction is that many real-world datasets, despite their high-dimensional representation, lie on or near a low-dimensional manifold. In such cases, we would like the learned latent space to faithfully represent the structure of this underlying manifold. However, the standard VAE framework assumes a Gaussian prior, which induces a Euclidean topology on the latent space. This creates a fundamental mismatch when the data manifold has non-Euclidean structure, for instance, when data lies on a sphere, torus, or bounded domain.

This mismatch has motivated several approaches to treat non-Euclidean latent spaces. Davidson et al.~\cite{davidson2018} introduced the hyperspherical VAE, which uses the von Mises-Fisher distribution for latent spaces on hyperspheres $S^{n-1}$. A follow-up~\cite{davidson2019} scaled this to higher dimensions using a product of vMF distributions. Falorsi et al.~\cite{falorsi2018} extended this to general Lie groups such as $SO(3)$, providing a principled approach for rotational latent spaces. Mathieu et al.~\cite{mathieu2019} proposed hyperbolic VAEs, where the Poincaré disc latent space is well-suited to data with hierarchical structure. Kalatzis et al.~\cite{kalatzis2020} extended this direction by learning Riemannian metric tensors to capture the geometry of learned representations. Perez Rey et al.~\cite{PerezRey2020} and Nagano et al.~\cite{Nagano2019} developed diffusion-based and wrapped normal distributions for VAEs on tori, spheres, and hyperbolic spaces. Normalizing flows on manifolds~\cite{Huh2023,Kuzina2022,Brehmer2020} take a different approach: rather than prescribing the prior topology, they compose diffeomorphisms to learn a flexible distribution, at the cost of Jacobian determinants and Monte Carlo KL estimation. Equivariant representations~\cite{Dupont2020} impose structure through architectural constraints, while topological autoencoders~\cite{Moor2021} use persistent homology to preserve topological structure in learned representations. Tomczak and Welling~\cite{tomczak2018} showed more generally that the choice of prior distribution has a key effect on the quality of learned representations, even without changing the latent topology. Miao et al.~\cite{miao2022} proposed an alternative mechanism: rather than modifying the prior, they keep a Gaussian prior and impose inductive biases through a learned parametric mapping from an intermediary latent space. Although these works have substantially advanced non-Euclidean representation learning, none provides a constructive design procedure for VAEs with a prescribed arbitrary topology while retaining a tractable closed-form ELBO. In this paper, we develop a unified framework that covers a broad class of manifolds, including many that arise in practice, and provides a systematic pipeline for constructing the corresponding VAEs.

The specific class of manifolds that we consider are those that admit a \emph{product covering space}: manifolds that can be expressed as a product of elementary factors (circles $S^1$, bounded intervals $[0,1]$, half-lines $[0,\infty)$, real lines $\mathbb{R}$, or hyperspheres $S^{n-1}$), or as a quotient of such a product by a finite symmetry group $G$. This class includes cylinders, tori, annuli, and hypercubes as product manifolds, and M\"{o}bius strips, Klein bottles, and real projective spaces as finite quotients of products. Product topologies arise naturally when data has periodic or bounded factors of variation. For example, dihedral angles in protein backbone geometry live on tori~\cite{Ramachandran1963,Boomsma2008}, and joint angles in robotics produce similar toroidal configuration spaces~\cite{LaValle2006}. Quotient topologies arise when a discrete symmetry identifies points in the product space. For instance, non-orientable topologies such as M\"{o}bius strips and Klein bottles arise in the analysis of natural image patches~\cite{Carlsson2008}, and real projective spaces describe the orientations of objects with head-to-tail symmetry~\cite{Gilitschenski2020}.

Our main contribution is a constructive framework that provides topology-aware VAEs with tractable ELBO objectives and end-to-end gradient-based training for any manifold in this class. Based on this result, we provide a systematic pipeline for building the corresponding VAE. The pipeline consists of the following steps:
\begin{enumerate}
    \item Express $\mathcal{M}$ as a product of elementary factors, or identify a product covering space $\widetilde{\mathcal{Z}}$ such that $\mathcal{M} = \widetilde{\mathcal{Z}}/G$. We prove that factorized encoder-prior distributions over the factors yield product topologies with closed-form, decoupled KL divergences (Propositions~\ref{prop:kl_decoupling}--\ref{prop:product_topology}).
    \item For each factor, select an encoder-prior distribution pair from a catalogue of reparametrizable distributions with tractable KL divergences (Table~\ref{tab:distributions}). We provide pairs for periodic, bounded, and unbounded supports, together with coordinate transformations $\chi$ that allow standard neural networks to output the required non-Euclidean parameters with smooth gradients.
    \item For quotient manifolds $\mathcal{M} = \widetilde{\mathcal{Z}}/G$, construct a $G$-invariant feature map $\rho$ that embeds latent coordinates in Euclidean space for the decoder, ensuring that identified points produce identical decoder outputs. We prove that such maps exist whenever $G$ is finite (Proposition~\ref{prop:quotient}) and give explicit constructions for M\"{o}bius strips, Klein bottles, and projective spaces.
    \item Add anchor constraints that fix the coordinate system relative to the data by pinning known observations to prescribed latent locations, or in repulsive form, create soft topological holes.
\end{enumerate}
Figure~\ref{fig:design_procedure} illustrates this pipeline with a Möbius strip example. The framework preserves the usual VAE benefits: the KL term encourages smooth, well-covered representations within the chosen topology, and the reparametrization trick enables end-to-end gradient-based training. In Section~\ref{sec:experiments}, we validate this pipeline on three synthetic manifolds (cylinder, torus, Möbius strip) with known ground truth, and on two real-image datasets (rotated MNIST and cyclically shifted MNIST). We show that when the prior matches the data topology and the coordinate system is grounded through anchoring, the KL term regularizes the latent representation within the correct manifold rather than against it. Topology-aware models then outperform the Gaussian baseline at all practically relevant regularization strengths in terms of reconstruction error and generative quality.

\begin{figure*}[t]
    \centering
    \includegraphics[width=\textwidth]{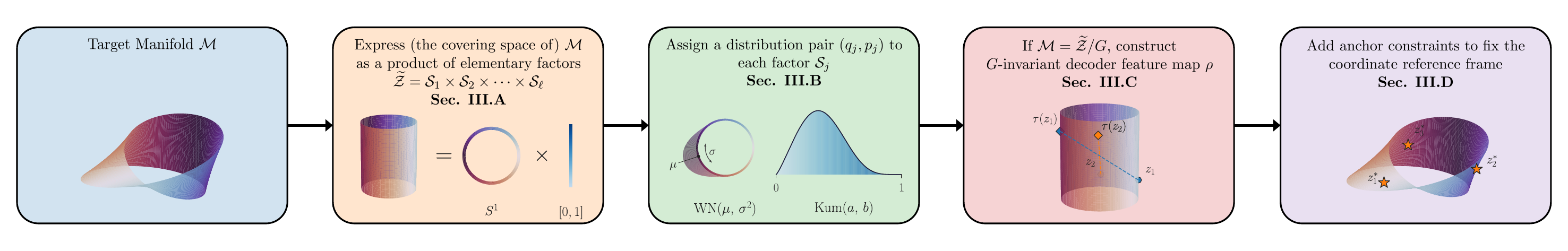}
    \caption{Pipeline for designing a topology-aware VAE, illustrated with the running example of a M\"{o}bius strip. \textbf{Step~1:} Given a target manifold $\mathcal{M}$, find a product covering space $\widetilde{\mathcal{Z}}$ and decompose it into elementary factors. The M\"{o}bius strip is the quotient of a cylinder by a $\mathbb{Z}_2$ action, so $\widetilde{\mathcal{Z}} = S^1 \times [0,1]$. \textbf{Step~2:} Assign a compatible encoder--prior pair from Table~\ref{tab:distributions} to each factor. Here, a Wrapped Normal is assigned to $S^1$ and a Kumaraswamy to $[0,1]$. \textbf{Step~3:} If $\mathcal{M}$ is a quotient, construct a $G$-invariant feature map $\rho$ so that the decoder receives only features that respect the identification; for the M\"{o}bius strip, this ensures $(h,\theta)\sim(1-h,\theta+\pi)$ produce identical outputs. When $\mathcal{M}$ is already a product, $\widetilde{\mathcal{Z}}=\mathcal{M}$ and this step is trivial ($\rho(z)=z$). \textbf{Step~4:} Add anchor constraints to fix the reference frame.}
    \label{fig:design_procedure}
\end{figure*}

The remainder of this paper is organized as follows. Section~\ref{sec:problem} formalizes the topology mismatch problem and reviews the VAE framework. Section~\ref{sec:methods} develops the four tools in the pipeline: product topologies, distribution pairs, quotient topologies, and anchor constraints. Section~\ref{sec:experiments} presents experimental validation on three synthetic datasets with known manifold structure, followed by two real-image experiments. Section~\ref{sec:discussion} discusses conclusions and future directions.

\section{Problem Formulation}
\label{sec:problem}
We begin by formalizing the relationship between data manifolds and latent space topology, then review the VAE framework and identify the mathematical requirements for valid inference with non-standard priors.

\subsection{Data on Low-Dimensional Manifolds}
\label{sec:manifold_assumption}
Many modern data analysis problems involve finding structure in high-dimensional observations that lie on or near a low-dimensional \emph{manifold}~\cite{Lee2011}, a space that locally resembles $\mathbb{R}^n$ but may have a different \emph{topology}, or global shape. For instance, a circle $S^1$ is locally like $\mathbb{R}$ but has a periodic topology that $\mathbb{R}$ does not.

We now formalize the data-generating setting. Let $\mathcal{M} \subseteq \mathbb{R}^n$ be an $n$-dimensional manifold representing the space of underlying factors of variation. High-dimensional observations $y \in \mathbb{R}^p$ (with $n \ll p$) are generated via
\begin{equation}
\label{eq:generative}
y = f(x) + \eta,
\end{equation}
where $x \in \mathcal{M}$ is a point on the manifold, $f: \mathcal{M} \to \mathbb{R}^p$ is the generative function, and $\eta$ represents observation noise. We call $x$ the \emph{true latent variable} associated with observation $y$, and $\mathcal{M}$ the \emph{true latent space}. We assume $f$ is \emph{injective} on $\mathcal{M}$, meaning that distinct true latent variables produce distinguishable observations. When a generative function is not injective, one can sometimes restore injectivity by passing to the quotient space $\mathcal{M}/{\sim}$, where $x_1 \sim x_2$ if and only if $f(x_1) = f(x_2)$.

Given data generated according to~\eqref{eq:generative}, two natural goals arise. The first is \emph{dimensionality reduction}: finding a low-dimensional representation that captures the essential structure of $\mathcal{M}$. The second is \emph{generative modeling}: learning to generate new observations consistent with~\eqref{eq:generative}. VAEs are a tool for achieving both goals simultaneously. However, when $\mathcal{M}$ has a nontrivial topology, standard VAEs face limitations that we address in Section~\ref{sec:topology_mismatch}.

\subsection{Variational Autoencoders}
A VAE~\cite{kingma2014} introduces a \emph{latent space} $\mathcal{Z}$ of dimension $\ell$ and represents each observation $y$ by a \emph{latent variable} $z \in \mathcal{Z}$, learned through an encoder-decoder architecture trained via variational inference. The VAE has three components: an \emph{encoder} $q_\phi(z|y)$, a distribution over $\mathcal{Z}$ parameterized by trainable weights $\phi$; a \emph{decoder} $p_\psi(y|z)$, a distribution over observation space $\mathbb{R}^p$ parameterized by weights $\psi$; and a \emph{prior} $p(z)$ over the latent variables. Figure~\ref{fig:VAE_architecture} illustrates the architecture.

\begin{figure}[t]
    \centering
    \includegraphics[scale=0.55]{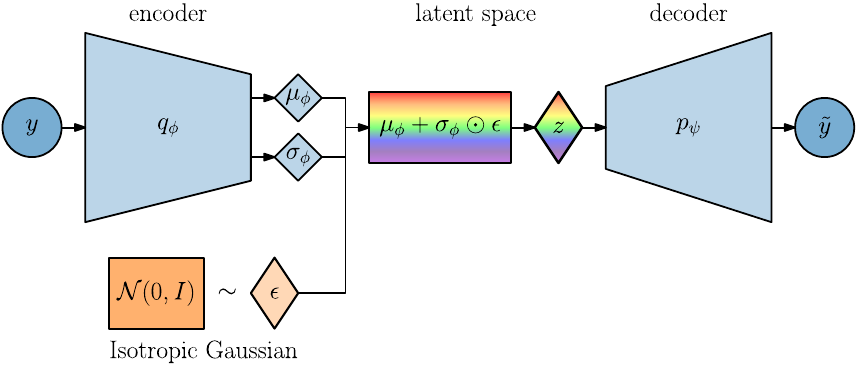}
    \caption{Standard VAE architecture. The encoder networks $\mu_\phi$ and $\sigma_\phi$ produce the parameters of the encoder distribution $q_\phi(z|y)$. A latent sample is drawn via the reparametrization trick: $z = \mu_\phi(y) + \sigma_\phi(y) \odot \epsilon$ with $\epsilon \sim \mathcal{N}(0, I)$. The decoder network $\mu_\psi$ maps $z$ to a reconstruction $\hat{y} = \mu_\psi(z)$.}
    \label{fig:VAE_architecture}
\end{figure}

In the standard VAE, the encoder distribution is a diagonal Gaussian whose mean $\mu_\phi: \mathbb{R}^p \to \mathbb{R}^\ell$ and standard deviation $\sigma_\phi: \mathbb{R}^p \to \mathbb{R}^\ell_{>0}$ are produced by neural networks:
\begin{equation}
q_\phi(z|y) = \mathcal{N}(z;\; \mu_\phi(y),\; \text{diag}(\sigma^2_\phi(y))).
\end{equation}
The decoder distribution is also Gaussian, with mean $\mu_\psi: \mathbb{R}^\ell \to \mathbb{R}^p$ produced by a neural network and a scalar variance $\sigma_\psi^2$ that is either fixed or learned:
\begin{equation}
p_\psi(y|z) = \mathcal{N}(y;\; \mu_\psi(z),\; \sigma_\psi^2 I).
\end{equation}
The decoder mean $\hat{y} = \mu_\psi(z)$ serves as the reconstruction of $y$, and $\sigma_\psi^2$ controls the assumed observation noise level. The standard prior is a zero-mean Gaussian with identity covariance (i.e., isotropic): $p(z) = \mathcal{N}(0, I)$.

The VAE is trained by maximizing the evidence lower bound (ELBO) over a dataset $\{y_i\}_{i=1}^N$. For a single observation $y$, the ELBO is
\begin{equation}
\label{eq:elbo}
\mathcal{L}(\phi, \psi; y) = \underbrace{\mathbb{E}_{q_\phi(z|y)}[\log p_\psi(y|z)]}_{\text{Reconstruction}} - \beta \underbrace{\text{KL}(q_\phi(z|y) \| p(z))}_{\text{Regularization}},
\end{equation}
where $\beta > 0$ is a scalar weight. The Kullback-Leibler (KL) divergence~\cite{cover2006} between two distributions $q$ and $p$ is defined as
\begin{equation}
\label{eq:kl_def}
\text{KL}(q \| p) = \mathbb{E}_{z \sim q}\!\left[\log \frac{q(z)}{p(z)}\right].
\end{equation}
It is nonnegative and equals zero if and only if $q = p$. For the diagonal Gaussian encoder and isotropic Gaussian prior, the KL divergence has the closed form
\begin{equation}
\label{eq:kl_gaussian}
\text{KL}(q_\phi(z|y) \| p(z)) = \frac{1}{2}\sum_{j=1}^\ell \left(\mu_j^2 + \sigma_j^2 - 1 - \log \sigma_j^2\right),
\end{equation}
where $\mu_j = [\mu_\phi(y)]_j$ and $\sigma_j = [\sigma_\phi(y)]_j$ are the $j$-th components of the encoder's mean and standard deviation outputs. This expression decomposes as a sum over latent dimensions because both distributions factorize across coordinates. We will exploit this decomposition property in Section~\ref{sec:product_topologies} to combine different distribution families within a single latent space.

The reconstruction term in~\eqref{eq:elbo} encourages accurate reconstruction of observations, while the KL term regularizes the encoder by encouraging $q_\phi(z|y)$ to remain close to the prior. The weight $\beta$ controls this trade-off: $\beta = 1$ gives the standard VAE, while $\beta \neq 1$ gives the $\beta$-VAE~\cite{Higgins2017}. Without regularization ($\beta = 0$), the optimal encoder collapses to a Dirac delta at the point minimizing reconstruction error for each input, overfitting the training data rather than learning a smooth latent representation.

To enable gradient-based optimization of~\eqref{eq:elbo}, the \textit{reparametrization trick}~\cite{kingma2014,rezende2014} expresses the latent sample as a deterministic function of the encoder parameters and independent noise:
\begin{equation}
z = \mu_\phi(y) + \sigma_\phi(y) \odot \epsilon, \quad \epsilon \sim \mathcal{N}(0, I),
\end{equation}
where $\odot$ denotes element-wise multiplication. Since $\epsilon$ is sampled independently of $\phi$, the gradient $\nabla_\phi \mathbb{E}_{q_\phi}[\log p_\psi(y|z)]$ can be computed by backpropagation through the sampling operation. Section~\ref{sec:methods} generalizes this construction to non-Gaussian distributions.

\subsection{The Topology Mismatch Problem}
\label{sec:topology_mismatch}
Two spaces are \emph{homeomorphic} if there exists a continuous bijection with continuous inverse (a \emph{homeomorphism}) between them, which implies they have the same topology. The standard Gaussian prior $p(z) = \mathcal{N}(0, I)$ has support $\mathcal{Z} = \mathbb{R}^\ell$, which is homeomorphic to Euclidean space. This is appropriate when the true latent space $\mathcal{M}$ is itself homeomorphic to $\mathbb{R}^n$, but creates a fundamental mismatch when $\mathcal{M}$ has a different topology. Many manifolds of practical interest are not homeomorphic to Euclidean space. These include bounded domains such as hypercubes $[0,1]^n$, periodic domains such as tori $T^n = (S^1)^n$, spheres $S^{n-1}$~\cite{davidson2018}, and quotient spaces such as projective spaces and Möbius strips.

To see why the topology of $\mathcal{Z}$ matters, recall that both dimensionality reduction and generative modeling aim to capture the structure of $\mathcal{M}$. For dimensionality reduction, we seek a mapping $g: f(\mathcal{M}) \to \mathcal{Z}$ such that $g \circ f: \mathcal{M} \to \mathcal{Z}$ is a homeomorphism. For such a homeomorphism to exist, $\mathcal{Z}$ must be homeomorphic to $\mathcal{M}$. This is the key requirement that motivates topology-aware VAEs. When $g \circ f$ is a homeomorphism but not the identity, the learned representation captures the manifold structure in a different coordinate system. For generative modeling, we often also desire \emph{interpretability}: specific latent coordinates should correspond to meaningful factors of variation. This requires recovering the original coordinates $g(f(x)) = x$ for all $x \in \mathcal{M}$, which requires additional constraints (Section~\ref{sec:anchoring}).

Before developing the solution, we address a natural question: \emph{is explicit topology shaping even necessary?} Standard VAEs can already learn approximate nontrivial topologies through the reconstruction term alone. For example, a Gaussian VAE trained on data with a periodic latent factor may arrange its latent variables in a ring around the origin, approximating the circular topology. However, this arrangement is not stable under changes in the regularization strength: the KL term in~\eqref{eq:elbo} continuously pushes latent variables toward the prior's center, working against the data's periodic structure. Increasing the KL weight $\beta$ eventually collapses the ring. Setting $\beta = 0$ avoids this collapse but removes all regularization, leading to overfitting and a latent space that does not generalize or support meaningful generation. In contrast, when the prior matches the data topology (e.g., a Wrapped Normal prior for a circular factor), the KL term is compatible with the manifold's structure, allowing strong regularization without topological distortion.

The topology mismatch problem can thus be stated as follows: \emph{given a true latent space $\mathcal{M}$ with known topology, construct a VAE whose latent space $\mathcal{Z}$ is homeomorphic to $\mathcal{M}$, while retaining a tractable ELBO and end-to-end gradient-based training.}

\section{Mathematical Framework for Topology Shaping}
\label{sec:methods}
For the ELBO~\eqref{eq:elbo} to remain tractable when using non-Gaussian distributions, three conditions must be satisfied:
\begin{enumerate}
\renewcommand{\labelenumi}{\Alph{enumi})}
\item The KL divergence $\text{KL}(q_\phi(z|y) \| p(z))$ must be computable, either in closed form or via efficient approximation. This requires $\text{supp}(q_\phi) \subseteq \text{supp}(p)$, since the KL divergence diverges when the encoder assigns positive probability to regions where $p(z) = 0$.
\item Samples $z \sim q_\phi$ must be expressible through the reparametrization trick as $z = T(\phi, \epsilon)$ with $\epsilon$ drawn from a fixed distribution independent of $\phi$, so that the reconstruction term can be optimized via backpropagation.
\item All transformations in the computational graph must be differentiable, so that gradients propagate end-to-end.
\end{enumerate}

We now develop a framework that satisfies these conditions for all manifolds that admit a \emph{product covering space}: manifolds that are either products of elementary factors, or quotients of such products by a finite symmetry group $G$. The four components of the framework (product topologies, distribution pairs, quotient constructions, and anchor constraints) are developed in Sections~\ref{sec:product_topologies}--\ref{sec:anchoring}, with limitations discussed in Section~\ref{sec:limitations}. Figure~\ref{fig:generalized_VAE_architecture} provides an overview of the generalized architecture.

\begin{figure}[t]
    \centering
    \includegraphics[scale=0.55]{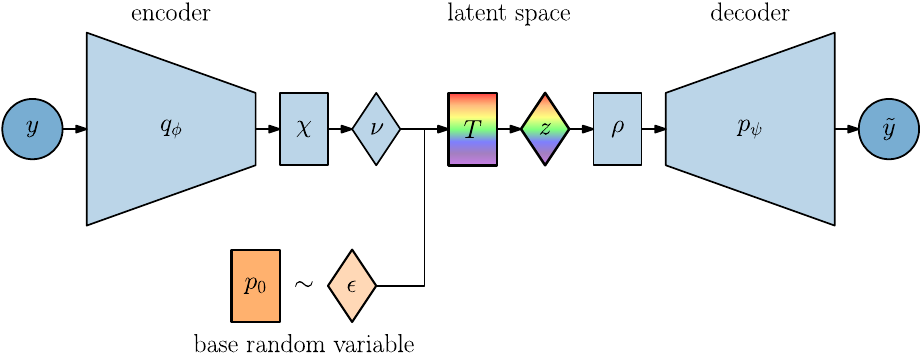}
    \caption{VAE architecture for topology shaping. The encoder maps observations $y$ to distribution parameters $\nu$, which define an encoder distribution $q_\phi(z|y)$ on target support $\mathcal{S}$ via coordinate transformation $\chi$. The reparametrization $T$ transforms base samples $\epsilon \sim p_0$ to latent samples $z$ on the target support $\mathcal{S}$. The decoder receives $z$ through feature map $\rho$, which embeds the latent coordinates in Euclidean space.}
    \label{fig:generalized_VAE_architecture}
\end{figure}

\subsection{Factorized Distributions and Product Topologies}
\label{sec:product_topologies}
The standard Gaussian VAE already exploits a factorized prior and encoder: both $p(z) = \mathcal{N}(0,I)$ and $q_\phi(z|y) = \mathcal{N}(\mu,\text{diag}(\sigma^2))$ decompose across dimensions, and the KL divergence in~\eqref{eq:kl_gaussian} splits into a sum of per-dimension terms. This factorization is not specific to Gaussians. Whenever both encoder and prior factorize across latent coordinates, the KL divergence decomposes into independent per-factor terms, regardless of the distribution families used~\cite{cover2006}. This observation is the foundation of our framework: it allows us to shape the latent topology one coordinate at a time while keeping the KL tractable.
\begin{proposition}[KL Divergence Decoupling]
\label{prop:kl_decoupling}
Let $q(z) = \prod_{j=1}^\ell q_j(z_j)$ and $p(z) = \prod_{j=1}^\ell p_j(z_j)$ be factorized distributions. Then
\begin{equation}
\label{eq:kl_decouple}
\emph{KL}(q \| p) = \sum_{j=1}^\ell \emph{KL}(q_j \| p_j).
\end{equation}
\end{proposition}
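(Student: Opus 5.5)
The plan is to start from the definition of the KL divergence in~\eqref{eq:kl_def} and use the fact that the logarithm turns the product structure of $q$ and $p$ into a sum. Since both densities factorize, for every $z$ in the support of $q$ we have
\begin{equation*}
\log\frac{q(z)}{p(z)} = \sum_{j=1}^\ell \log\frac{q_j(z_j)}{p_j(z_j)}.
\end{equation*}
Taking the expectation under $z\sim q$ and using linearity of expectation gives $\text{KL}(q\|p)=\sum_{j=1}^\ell \mathbb{E}_{z\sim q}[\log(q_j(z_j)/p_j(z_j))]$.

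The next step is to reduce each summand to a one-dimensional expectation. The $j$-th term depends on $z$ only through $z_j$. Under the product measure $q$, the marginal of $z_j$ is exactly $q_j$, so by Fubini/Tonelli the integral over the remaining coordinates integrates $\prod_{k\neq j} q_k(z_k)$ to one. This leaves $\mathbb{E}_{z_j\sim q_j}[\log(q_j(z_j)/p_j(z_j))]=\text{KL}(q_j\|p_j)$, and summing over $j$ yields~\eqref{eq:kl_decouple}. The argument is the same whether each factor lives on $\mathbb{R}$, an interval, or $S^1$: it only requires densities with respect to a product reference measure, such as Lebesgue measure on $\mathbb{R}$ or an interval, or arc-length measure on $S^1$.

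The main obstacle is measure-theoretic rather than algebraic. Splitting the expectation of a sum into a sum of expectations requires each term to be integrable, or at least that no $+\infty-\infty$ cancellation occurs. I would handle this in two cases.
\begin{itemize}
\item If some $\text{KL}(q_j\|p_j)=+\infty$, for instance because $q_j$ is not absolutely continuous with respect to $p_j$, then $q$ is not absolutely continuous with respect to $p$ either. The left-hand side is then also $+\infty$, and the identity holds in $[0,\infty]$.
\item If all $\text{KL}(q_j\|p_j)$ are finite, the positive part of each log-ratio is $q$-integrable. The negative part is controlled because $\mathbb{E}_q[(\log(q_j/p_j))^-]\le 1$, which follows from $x\log x\ge x-1$ applied to the ratio $q_j/p_j$. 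Hence each term is integrable and linearity applies.
\end{itemize}
In the paper's setting, condition A) with $\text{supp}(q_j)\subseteq\text{supp}(p_j)$ and closed-form finite per-factor KLs places us in the second case. There the routine computation above suffices.
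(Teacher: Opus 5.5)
Your proof is correct and follows the same route as the paper's appendix proof: turn the logarithm of the product into a sum of per-factor log-ratios, then integrate out the remaining coordinates using $\int q_j(z_j)\,dz_j = 1$. Your integrability discussion adds rigor the paper omits. One small refinement: the bound $\mathbb{E}_{q_j}[(\log(q_j/p_j))^-]\le 1$ does not depend on the KLs being finite, so your second-bullet argument also covers the case where every $q_j \ll p_j$ but some $\text{KL}(q_j\|p_j)=+\infty$, a case your first bullet only asserts.
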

See Appendix~\ref{app:kl_decoupling_proof} for the proof.

This decoupling has three important consequences. First, the KL divergence can be computed efficiently as a sum of per-factor divergences, each of which may have a different closed form depending on the distribution family. Second, different distribution families can be mixed across coordinates: one latent dimension can use a Gaussian while another uses a Wrapped Normal. Third, by selecting a distribution pair whose support matches the desired topology for each factor, we can shape the latent space factor by factor. This last point is formalized in the following proposition.

\begin{proposition}[Product Topologies]
\label{prop:product_topology}
Let $q_\phi(z|y) = \prod_{j=1}^\ell q_j(z_j|y)$ and $p(z) = \prod_{j=1}^\ell p_j(z_j)$ be factorized distributions where each pair $(q_j, p_j)$ satisfies $\text{supp}(q_j) \subseteq \text{supp}(p_j)$ and has a computable KL divergence. Then the latent space has the product topology
\begin{equation}
\mathcal{Z} = \mathcal{S}_1 \times \mathcal{S}_2 \times \cdots \times \mathcal{S}_\ell,
\end{equation}
where $\mathcal{S}_j = \text{supp}(p_j)$ is the support of the $j$-th coordinate's prior. The ELBO is tractable with the KL term decomposing as a sum over coordinates through Proposition~\ref{prop:kl_decoupling}.
\end{proposition}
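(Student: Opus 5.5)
The plan has two parts: show that the support of a product prior is the product of the factor supports, and show that the ELBO stays tractable because its KL term splits as in Proposition~\ref{prop:kl_decoupling}. The latent space is, by definition, the set where the prior puts mass, $\mathcal{Z} = \text{supp}(p)$. Every sample from any encoder lies in it, since $\text{supp}(q_\phi(\cdot|y)) \subseteq \text{supp}(p)$.

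\emph{Support of the product.} I will show $\text{supp}(p) = \prod_j \text{supp}(p_j)$, where the support of a measure is the smallest closed set of full measure. Equivalently, a point lies in the support iff every open neighbourhood of it has positive mass. The argument runs in two directions.
\begin{itemize}
\item[$\supseteq$:] Take $z=(z_1,\dots,z_\ell)$ with each $z_j \in \mathcal{S}_j$. Any open neighbourhood of $z$ contains a basic open box $U_1\times\cdots\times U_\ell$ with $z_j\in U_j$. That box has product measure $\prod_j p_j(U_j) > 0$.
\item[$\subseteq$:] Suppose $z_k \notin \mathcal{S}_k$ for some $k$. Then there is an open $U_k \ni z_k$ with $p_k(U_k)=0$. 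The cylinder set $\{z' : z'_k\in U_k\}$ is open, contains $z$, and has measure $p_k(U_k)\prod_{j\neq k}p_j(\mathcal{S}_j)=0$.
\end{itemize}
Hence $\mathcal{Z}$ equals $\mathcal{S}_1\times\cdots\times\mathcal{S}_\ell$ as a set. Each $\mathcal{S}_j$ carries its own topology: quotient or intrinsic for $S^1$, subspace for intervals and lines. I then equip $\mathcal{Z}$ with the product topology, which agrees with the topology of the ambient product space because the subspace topology of a product of subspaces is the product of the subspace topologies.

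\emph{Encoder compatibility and tractability.} The same argument applied to $q_\phi(\cdot|y)$ gives $\text{supp}(q_\phi(\cdot|y)) = \prod_j \text{supp}(q_j) \subseteq \prod_j \mathcal{S}_j$. This is condition A) in its joint form, so every encoder sample lies in $\mathcal{Z}$ and the joint KL is finite whenever each factor KL is. By Proposition~\ref{prop:kl_decoupling}, $\text{KL}(q_\phi(z|y)\|p(z)) = \sum_j \text{KL}(q_j\|p_j)$, and each summand is computable by hypothesis. The reconstruction term is unchanged, so the ELBO~\eqref{eq:elbo} is tractable.

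The main obstacle is conceptual rather than computational: the statement is loose about which topology $\mathcal{S}_j$ has. For periodic factors, a density on $[0,2\pi)$ must be read on $S^1$ with endpoints identified, otherwise the support is a half-open interval rather than a circle. I would handle this by fixing at the outset that each $\mathcal{S}_j$ is the support taken in the factor's native space ($S^1$, $[0,1]$, $\mathbb{R}$, and so on). I would also assume the densities are positive almost everywhere on their supports, so that the supports are exactly the intended elementary spaces.
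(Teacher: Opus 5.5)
Your proposal is correct and takes the same route as the paper. The paper's proof simply asserts that the support of a product distribution is the Cartesian product of the factor supports, which gives the product topology, and that tractability follows from Proposition~\ref{prop:kl_decoupling}. Your two-inclusion argument for $\text{supp}(p)=\prod_j \text{supp}(p_j)$, together with your remark about reading periodic supports on $S^1$, just makes explicit what the paper calls trivial.
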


The proof of this proposition is trivial. The support of a product distribution is the Cartesian product of the individual supports, which defines the product topology on $\mathcal{Z}$. Tractability follows from Proposition~\ref{prop:kl_decoupling}.

Although factorized distributions can only produce \emph{product topologies}, the product construction is surprisingly versatile. Provided that reparametrizable distribution pairs with the right supports can be found, Propositions~\ref{prop:kl_decoupling} and~\ref{prop:product_topology} immediately yield tractable VAEs on tori, cylinders, annuli, and arbitrary higher-dimensional products. Davidson et al.~\cite{davidson2019} applied the same principle to hyperspherical latent spaces, replacing a single high-dimensional vMF distribution with a product of lower-dimensional ones to sidestep the concentration bottleneck that arises at high dimension. The construction extends even further: many manifolds that are not themselves products (Möbius strips, Klein bottles, projective spaces) arise as quotients of products by discrete group actions. Constructing suitable distribution pairs for each support type and handling the identifications that quotient topologies require are nontrivial, however, and are the subjects of Sections~\ref{sec:reparametrization} and~\ref{sec:quotient_topologies}.

One alternative is to drop the factorization assumption and estimate the KL divergence via Monte Carlo: $\text{KL}(q \| p) = \mathbb{E}_{z \sim q}[\log q(z) - \log p(z)]$. Normalizing flows~\cite{rezende2015} provide the required reparametrization and tractable log-densities, but Monte Carlo estimation introduces gradient variance, and evaluating log-densities for expressive flows requires computing Jacobian determinants at each sample. We instead maintain factorized distributions with their closed-form KL divergences, and handle non-product topologies through $G$-invariant decoder features in Section~\ref{sec:quotient_topologies}. The next section catalogues the specific distribution pairs available for each elementary factor type.

\subsection{Reparametrizable Distribution Pairs}
\label{sec:reparametrization}
In this section, we catalogue encoder-prior pairs that have computable KL divergences with matching supports, and that admit reparametrizable sampling (satisfying requirements A) and B) from Section~\ref{sec:problem}). The key construction is the reparametrization trick~\cite{kingma2014}: expressing encoder samples as deterministic transformations of samples from a fixed base distribution. Let $\epsilon \sim p_0$ be a base random variable with support $\mathcal{S}_0$, and let $T: \mathcal{S}_0 \times \mathcal{V} \to \mathcal{S}$ be a smooth map parameterized by $\nu \in \mathcal{V}$. Then $z = T(\epsilon; \nu)$ defines a reparametrizable distribution on $\mathcal{S}$ with
\begin{equation}
\label{eq:reparam_general}
\nabla_\nu \mathbb{E}_{z}[h(z)] = \mathbb{E}_{\epsilon \sim p_0}[\nabla_\nu h(T(\epsilon; \nu))],
\end{equation}
for any differentiable function $h$, provided $T$ is differentiable in $\nu$. This moves the stochasticity into $\epsilon$, allowing gradients to flow through $T$.

A classical construction takes $p_0 = \mathcal{U}(0,1)$ and $T = F^{-1}$, where $F$ is the CDF of the target distribution. This yields samples from any continuous distribution with invertible CDF. The Kumaraswamy and Exponential distributions in Table~\ref{tab:distributions} use this approach. Other choices of $p_0$ and $T$ are also valid: the Gaussian uses $p_0 = \mathcal{N}(0,1)$ with affine $T(\epsilon; \mu, \sigma) = \mu + \sigma\epsilon$; the Wrapped Normal applies a wrapping operation to Gaussian samples; the von Mises-Fisher uses rejection sampling with reparametrized proposals~\cite{davidson2018}.

Table~\ref{tab:distributions} lists per-factor distribution pairs used in this paper. Each entry specifies the support, the reparametrization $T$, and whether the KL divergence admits a closed form, an approximation, or requires Monte Carlo estimation. Combined with the factorized KL from Section~\ref{sec:product_topologies}, these pairs allow product topologies to be assembled factor by factor. This framework is not limited to one-dimensional factors: a factor can be a hypersphere $S^{n-1}$ with a von~Mises--Fisher distribution or a Euclidean space $\mathbb{R}^n$ with a multivariate Gaussian, as long as the distributions factorize; Table~\ref{tab:distributions} focuses on low-dimensional building blocks because these suffice for our experiments. Closed-form KL expressions for each pair are provided in Appendix~\ref{app:kl}.

The table also includes the Gumbel-Softmax relaxation~\cite{Jang2017,Maddison2017}, which extends the framework to discrete latent factors via approximate reparametrization of categorical variables, enabling hybrid discrete--continuous latent spaces~\cite{Dupont2018}. All distribution pairs in Table~\ref{tab:distributions} satisfy requirements A) and B). Requirement C) (smooth gradient flow) additionally demands that the encoder's real-valued outputs be mapped to valid parameter spaces without gradient pathologies, which is addressed next.

\begin{table*}[t]
\caption{Distribution Pairs for Topology Shaping}
\label{tab:distributions}
\centering
\begin{tabular}{lllll}
\hline
\textbf{Encoder Distribution} & \textbf{Support} & \textbf{Reparametrization $T$} & \textbf{Prior Distribution} & \textbf{KL} \\
\hline
Gaussian $\mathcal{N}(\mu, \sigma^2)$ & $\mathbb{R}$ & $\mu + \sigma\epsilon$, \quad $\epsilon \sim \mathcal{N}(0,1)$ & $\mathcal{N}(0,1)$ & Closed \\
Exponential $\text{Exp}(\lambda_q)$ & $[0, \infty)$ & $-\log(1-u)/\lambda_q$, \quad $u \sim \mathcal{U}(0,1)$ & $\text{Exp}(\lambda_0)$ & Closed \\
Kumaraswamy $\text{Kum}(a,b)$ & $[0, 1]$ & $(1-(1-u)^{1/b})^{1/a}$, \quad $u \sim \mathcal{U}(0,1)$ & $\mathcal{U}(0,1)$ & Closed \\
Uniform $\mathcal{U}(a,b)$ & $[a, b]$ & $a + (b-a)u$, \quad $u \sim \mathcal{U}(0,1)$ & $\mathcal{U}(a,b)$ & Closed \\
Wrapped Normal $\text{WN}(\mu, \sigma^2)$ & $S^1$ & $\text{wrap}(\mu + \sigma\epsilon)$, \quad $\epsilon \sim \mathcal{N}(0,1)$ & $\mathcal{U}(S^1)$ & Approximate \\
von Mises-Fisher $\text{vMF}(\mu, \kappa)$ & $S^{n-1}$ & Rejection sampling with Householder flow~\protect\cite{davidson2018} & $\mathcal{U}(S^{n-1})$ & MC \\
Gumbel-Softmax $\text{GS}(\pi, t)$ & $\{1,\ldots,K\}$ & $\text{softmax}((\log\pi + g)/t)$, \quad $g \sim \text{Gumbel}(0,1)$ & $\text{Cat}(1/K)$ & Closed \\
\hline
\end{tabular}
\end{table*}

\subsubsection{Encoder Coordinate Transformations}
\label{sec:encoder_coords}
Neural networks produce outputs in $\mathbb{R}^n$, but the distribution parameters from Table~\ref{tab:distributions} may live in non-Euclidean parameter spaces. We therefore need a coordinate transformation $\chi: \mathbb{R}^m \to \mathcal{V}$ that maps the encoder's real-valued outputs to valid parameters on the target support. This $\chi$ must cover all of $\mathcal{V}$ and be smooth with informative gradients everywhere in its domain. Naive approaches fail both requirements: clipping to a bounded range $[a,b]$ creates zero gradients at the boundaries, and taking outputs modulo $2\pi$ for periodic parameters creates gradient discontinuities at the wrap-around point.

For periodic parameters, we use a normalization-based construction. To output an angle $\mu \in S^1$, the encoder produces two real numbers $(c', s') \in \mathbb{R}^2$, which we normalize and convert:
\begin{equation}
(c, s) = \frac{(c', s')}{\|(c', s')\|}, \quad \mu = \arctan2(s, c).
\end{equation}
This defines $\chi: \mathbb{R}^2 \setminus \{0\} \to S^1$, which is smooth with well-defined gradients everywhere except the origin (a set of measure zero that is never reached in practice). For bounded parameters like the standard deviation $\sigma > 0$ of a Wrapped Normal, a softplus transformation $\chi(\cdot) = \log(1 + e^{(\cdot)})$ maps $\mathbb{R} \to \mathbb{R}_{>0}$ smoothly.

At this point, the framework appropriately transforms network outputs through the encoder and the variational sampling operation to produce latent samples on the target support while preserving differentiability. However, the decoder faces a different challenge: its input consists of the sampled latent coordinates themselves, which may have identifications that a standard neural network cannot respect. The next section addresses this.

\subsection{Quotient Topologies via Invariant Features}
\label{sec:quotient_topologies}
After sampling $z$ from the encoder distribution, we pass it to the decoder. In general, the latent coordinates may have \emph{identifications}: distinct coordinate values that represent the same point. If $z_1 \sim z_2$ are identified, the decoder must satisfy $p_\psi(y \mid z_1) = p_\psi(y \mid z_2)$; otherwise, the model treats equivalent points inconsistently. Even the circle $S^1 = \mathbb{R}/\mathbb{Z}$ identifies $\theta \sim \theta + 2\pi$, while the Möbius strip also identifies $(h, \theta) \sim (1-h, \theta+\pi)$. Our solution is to feed the decoder not the raw coordinates $z$, but a transformed representation using a \emph{feature map} $\rho: \widetilde{\mathcal{Z}} \to \mathbb{R}^k$ that is invariant to the identifications.

We formalize identifications using group actions. Let $\widetilde{\mathcal{Z}}$ be a product space with a finite group $G$ acting smoothly on it, and let $\mathcal{Z} = \widetilde{\mathcal{Z}}/G$ be the quotient space where points in the same $G$-orbit are identified: $z_1 \sim z_2$ iff $z_2 = g \cdot z_1$ for some $g \in G$. The space $\widetilde{\mathcal{Z}}$ is called a covering space of $\mathcal{Z}$: informally, it is a ``global unfolding'' of $\mathcal{Z}$ in which identified points are pulled apart. Note that while a given manifold may admit multiple covering spaces, we specifically require any covering space that is itself a product of elementary factors. This way, the factorized distributions from Section~\ref{sec:product_topologies} apply on the encoder side. Each point in $\mathcal{Z}$ then has $|G|$ preimages in $\widetilde{\mathcal{Z}}$, so the encoder operates on the covering space while only the decoder input changes. For $\rho$ to enforce the identifications, it must be (i) $G$-invariant, i.e., $\rho(z) = \rho(g \cdot z)$ for all $g \in G$; (ii) injective on $G$-orbits, so that distinct equivalence classes remain distinguishable; and (iii) smooth, so that gradients can flow through $\rho$ during backpropagation. The following proposition guarantees that such a map always exists when $G$ is finite.

\begin{proposition}[Quotient Topologies via Invariant Features]
\label{prop:quotient}
Let $\widetilde{\mathcal{Z}} \subseteq \mathbb{R}^m$ and let $G$ be a finite group acting smoothly on $\widetilde{\mathcal{Z}}$. Then there exists a smooth map $\rho: \widetilde{\mathcal{Z}} \to \mathbb{R}^k$ that is $G$-invariant and injective on $G$-orbits.
\end{proposition}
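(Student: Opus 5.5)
The approach is classical invariant theory: symmetrize polynomial coordinates over the finite group $G$ to get an invariant map, then show that enough such symmetrized functions separate orbits. Two points need care: $G$ acts only smoothly (not linearly) on $\widetilde{\mathcal{Z}}$, and we want a finite $k$.

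\textbf{Step 1 (an equivariant embedding).} Let $\iota:\widetilde{\mathcal{Z}}\to\mathbb{R}^{m|G|}$ be
\[
\iota(z)=\bigl(g\cdot z\bigr)_{g\in G}.
\]
Each component $z\mapsto g\cdot z$ is smooth because the action is smooth, so $\iota$ is smooth. The map $\iota$ intertwines the action with the linear permutation action of $G$ on the blocks: $\iota(h\cdot z)$ is $\iota(z)$ with its blocks permuted by right multiplication by $h$.

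This reduces the problem to a finite group acting linearly on $V=\mathbb{R}^{m|G|}$ by permutation matrices. It suffices to find a smooth (indeed polynomial) $G$-invariant $\sigma:V\to\mathbb{R}^k$ that separates $G$-orbits in $V$, and then set $\rho=\sigma\circ\iota$. Invariance of $\rho$ is immediate from equivariance of $\iota$.

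For orbit-injectivity, suppose $\rho(z_1)=\rho(z_2)$. Then $\iota(z_2)=\pi\cdot\iota(z_1)$ for some $\pi\in G$, which by equivariance equals $\iota(\pi\cdot z_1)$. Reading off the identity block gives $z_2=\pi\cdot z_1$. So $\iota$ needs no injectivity beyond this, since its identity block already records $z$.

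\textbf{Step 2 (separating invariants for a finite linear group).} For a finite group acting linearly on $V$, I would take the Reynolds averages of all monomials of degree at most $|G|$:
\[
\bar p(v)=\frac{1}{|G|}\sum_{g\in G}p(g\cdot v).
\]
By Noether's bound these generate the invariant ring, so there are finitely many of them; let $\sigma$ collect them. Each is a $G$-invariant polynomial, hence smooth.

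To see that they separate orbits, take $v,w$ with disjoint orbits. Choose a polynomial $p$ with $p=0$ on $G\cdot v$ and $p=1$ on $G\cdot w$, which is possible by Lagrange interpolation on finitely many points. Form $P=\prod_{g\in G}(g\cdot p)$, where $(g\cdot p)(x)=p(g\cdot x)$. Then $P$ is invariant, vanishes on $G\cdot v$, and equals $1$ on $G\cdot w$. Since $P$ is a polynomial in the generators $\bar p$, these generators must take different values at $v$ and $w$. Hence $\sigma$ is injective on orbits.

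\textbf{Main obstacle.} The delicate point is Step 2: making orbit separation rigorous while keeping $k$ finite. If one prefers not to invoke Noether's bound, a more hands-on alternative avoids invariant theory entirely. Let $e_1,\dots,e_{|G|}$ be the elementary symmetric polynomials, and apply them to the multiset $\{\langle u, g\cdot z\rangle : g\in G\}$ for each $u$ in a suitable finite set of directions. Together these determine the multiset $\{g\cdot z: g\in G\}$, which is exactly the orbit.

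Either route gives an explicit smooth $\rho$. For the Möbius strip this recovers features of the kind constructed later in the paper.
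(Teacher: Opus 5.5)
Your proof is correct, and it differs from the paper's in two key steps, although both share the idea of taking a product over $G$ to build a separating invariant.

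The paper works directly in $\mathbb{R}^m$. For each pair of orbits it uses $f(z)=\prod_{g\in G}\|z-g\cdot z_1\|^2$ as a separating invariant, and then appeals to finite covering dimension to pass to finitely many functions. As written, that $f$ is $G$-invariant when $G$ acts by Euclidean isometries, as in the M\"obius and Klein examples, but not for a general smooth action. The finiteness step is asserted rather than proved.

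Your embedding $\iota(z)=(g\cdot z)_{g\in G}$ removes the isometry requirement by converting any smooth action into a linear block-permutation action, with the identity block recovering $z$. Your invariant $P=\prod_{g}(g\cdot p)$ is then the valid counterpart of the paper's product of squared distances. Noether's bound replaces the dimension-theoretic step with an explicit finite list: the Reynolds averages of monomials of degree at most $|G|$ in $m|G|$ variables.

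What each route buys:
\begin{itemize}
\item The paper's route, where it applies, is shorter, gives a $\rho$ that is polynomial in $z$ itself, and points toward a small $k$.
\item Yours is fully general and complete, at the price of a very large $k$ and a $\rho$ that is polynomial only in $\iota(z)$.
\item Your version also explains the behaviour of the procedure in Appendix~\ref{app:reynolds}. When the basis functions embed $\widetilde{\mathcal{Z}}$ and $G$ acts on them linearly, Noether's bound guarantees termination by degree $|G|$, which is $2$ for the M\"obius strip.
\end{itemize}

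One small point on your elementary-symmetric alternative: the choice of ``suitable'' directions matters when $m\ge 3$. For example, directions confined to a plane never see the orthogonal component. If you choose the directions to be unisolvent for polynomials of degree at most $|G|$, each power sum $u\mapsto\sum_g\langle u,g\cdot z\rangle^j$ is recovered as a polynomial in $u$, which pins down the multiset $\{g\cdot z\}$.
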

\begin{proof}
For any two points $z_1, z_2$ in distinct $G$-orbits, the function $f(z) = \prod_{g \in G} \|z - g \cdot z_1\|^2$ is a polynomial in $z$ (hence smooth), $G$-invariant, vanishes on the orbit of $z_1$, and is strictly positive on the orbit of $z_2$. So $G$-invariant smooth functions separate orbits. Since $\widetilde{\mathcal{Z}} \subseteq \mathbb{R}^m$ has finite covering dimension, finitely many such functions suffice to form an injective map $\rho$ on the orbit space. As each component of $\rho$ is a polynomial, $\rho$ is smooth.
\end{proof}

Proposition~\ref{prop:quotient} guarantees the existence of the required invariant features for any finite group $G$. For constructing $\rho$ in practice, the \emph{Reynolds operator}~\cite{Sturmfels2008} is the central tool. Given any smooth function $f: \widetilde{\mathcal{Z}} \to \mathbb{R}$, its group average
\begin{equation}
R[f](z) = \frac{1}{|G|}\sum_{g \in G} f(g \cdot z)
\end{equation}
is automatically $G$-invariant and smooth. Starting from a basis of smooth functions on the covering space (trigonometric for angular coordinates, polynomial for intervals), one applies $R$ to monomials of increasing degree, discards dependent and vanishing terms, and verifies that the remaining features separate $G$-orbits. Proposition~\ref{prop:quotient} guarantees termination at some finite degree. Appendix~\ref{app:reynolds} gives the full procedure and worked examples. Since evaluating $R$ requires only $|G|$ function evaluations, the construction cost is linear in the group order. Moreover, $\rho$ is determined once at design time; during training it is a fixed closed-form function that adds no computational overhead beyond the increase in decoder input dimension. The following examples illustrate $\rho$ for three quotient spaces.

\begin{example}[Periodic coordinates]
For a circular factor, the standard embedding $\rho(\theta) = (\cos\theta, \sin\theta)$ is the required invariant map. For product spaces without further identifications, $\rho$ applies coordinate-wise: a cylinder $S^1 \times [0,1]$ uses $\rho(\theta, h) = (\cos\theta, \sin\theta, h)$. When the target manifold has additional identifications beyond periodicity, the feature maps compose accordingly.
\end{example}

\begin{example}[M\"{o}bius strip]
The M\"{o}bius strip is $([0,1] \times S^1) / \mathbb{Z}_2$ with identification $(h, \theta) \sim (1-h, \theta + \pi)$. Applying the Reynolds operator to the cylinder decoder features $\{\cos\theta, \sin\theta, h - \tfrac{1}{2}\}$ (all of which flip sign under the deck transformation), one finds that degree-1 invariants vanish and the surviving degree-2 features are (see Appendix~\ref{app:reynolds} for the full derivation):
\begin{equation}
\label{eq:mobius_invariant}
\rho(h, \theta) = \begin{pmatrix} \cos(2\theta) \\ \sin(2\theta) \\ (h - \tfrac{1}{2})^2 \\ \cos(\theta)(h - \tfrac{1}{2}) \\ \sin(\theta)(h - \tfrac{1}{2}) \end{pmatrix}.
\end{equation}
From $\cos 2\theta$ and $\sin 2\theta$, the angle $\theta$ is determined up to $\theta$ or $\theta + \pi$; the cross terms then resolve the remaining ambiguity, so $\rho$ is injective on $\mathbb{Z}_2$-orbits.
\end{example}

\begin{example}[Klein bottle]
The Klein bottle is $T^2 / \mathbb{Z}_2$ with identification $(\theta_1, \theta_2) \sim (\theta_1 + \pi, -\theta_2)$. Applying the Reynolds operator to the torus decoder features $\{\cos\theta_1, \sin\theta_1, \cos\theta_2, \sin\theta_2\}$ yields the degree-2 invariant features:
\begin{equation}
\rho(\theta_1, \theta_2) = \begin{pmatrix} \cos(2\theta_1) \\ \sin(2\theta_1) \\ \cos(\theta_2) \\ \cos(\theta_1)\sin(\theta_2) \\ \sin(\theta_1)\sin(\theta_2) \end{pmatrix}.
\end{equation}
These features satisfy $\rho(\theta_1 + \pi, -\theta_2) = \rho(\theta_1, \theta_2)$, and the cross terms resolve the remaining ambiguity to ensure injectivity on $\mathbb{Z}_2$-orbits.
\end{example}

\begin{remark}
One might also consider the reverse construction: taking quotients factor-by-factor and then forming a product. This yields the same class of manifolds when each group acts on a single factor, since $(\mathcal{S}_1/G_1) \times (\mathcal{S}_2/G_2) \cong (\mathcal{S}_1 \times \mathcal{S}_2)/(G_1 \times G_2)$. The product-then-quotient formulation used here is strictly more general, as it also allows group actions that couple coordinates across factors (e.g., the $\mathbb{Z}_2$ action on the Möbius strip mixes $h$ and $\theta$).
\end{remark}

Sections~\ref{sec:product_topologies}--\ref{sec:quotient_topologies} provide all the tools to build a latent space with a prescribed product or quotient topology. However, the learned coordinate system is typically not unique: any self-homeomorphism of $\mathcal{Z}$ composed with the encoder is equally valid, and nothing in the training objective pins down which homeomorphism is learned.

\subsection{Anchor Constraints}
\label{sec:anchoring}
Anchor constraints resolve this coordinate ambiguity by specifying where particular observations should map in latent space. They can also create approximate topological holes, which is useful when the desired topology cannot be expressed as a product of bounded or periodic supports.

\subsubsection{Reference frame anchoring}
The coordinate ambiguity arises because the target latent space $\mathcal{Z}$ admits self-homeomorphisms: for example, $S^1 \times [0,1]$ is unchanged under any rotation of the angular coordinate, so the encoder and any such rotation of it are equally valid solutions. This issue is noted but left unresolved in prior work on non-Euclidean VAEs~\cite{falorsi2018}. Anchor constraints fix the coordinate frame by penalizing deviation from prescribed locations. Writing $\mu_i = \mathbb{E}_{q_\phi}[z \mid y_i]$ for the posterior mean, the anchor loss is
\begin{equation}
\mathcal{L}_{\text{anchor}} = \frac{\lambda}{M} \sum_{i=1}^{M} \|\mu_i - z_i^*\|^2,
\end{equation}
where $y_1, \ldots, y_M$ are anchor observations and $z_1^*, \ldots, z_M^*$ their prescribed target locations. For many distributions (including Gaussian and Kumaraswamy) the mean $\mu_i$ is available in closed form; otherwise Monte Carlo estimation can be used. The loss generalizes center loss~\cite{wen2016}, which attracts class features toward learnable centers; here the targets are fixed a priori for coordinate alignment.

More generally, anchoring need not cover the full latent vector. When $\mathcal{Z}$ decomposes as a product and only one factor carries the coordinate ambiguity (as with $S^1 \times [0,1]$, where the angular coordinate is rotationally ambiguous but the interval coordinate is not), the norm above need only be computed over the ambiguous coordinates. The remaining dimensions are left free to self-organize through the reconstruction objective. This reduces the number of labeled anchor observations required and avoids over-constraining coordinates with no ambiguity.

To break all continuous symmetries, the anchors must be in general position. If the symmetry group $G$ has orbits of dimension $d$ over the constrained coordinates, at least $d+1$ anchors are needed with no $d$ of them lying in the same orbit. For $S^1 \times [0,1]$ with $G = SO(2)$ acting by rotation of the circle, two anchors at distinct angles suffice.

\subsubsection{Soft topological holes}
Repulsive anchors can create approximate topological holes by penalizing proximity to specified locations. We seek a potential that diverges logarithmically at large distances (providing a long-range repulsive force) but remains finite at the origin (avoiding gradient singularities). A natural choice is the regularized log-potential
\begin{equation}
\mathcal{L}_{\text{repel}} = \frac{\lambda_r}{N} \sum_{i=1}^{N} \log\!\left(\frac{\|z_i - z^*\|^2}{\xi^2} + 1\right),
\end{equation}
where $z_i$ are reparametrized samples from the encoder distribution, $z^*$ is the center of the hole, and $\xi > 0$ controls the effective hole radius. The parameter $\xi$ directly sets the crossover scale: points within distance $\xi$ from $z^*$ experience strong repulsion, while points beyond roughly $2\xi$ experience negligible effect. Unlike reference frame anchoring, which constrains the expected position, the repulsive loss operates on samples to ensure the entire posterior mass avoids the hole.

Unlike bounded distributions (Proposition~\ref{prop:product_topology}), soft constraints cannot guarantee exact holes since the prior still has full support. They are useful when the desired topology cannot be expressed as a product or quotient of products.

\subsection{Achievable Topologies and Limitations}
\label{sec:limitations}
Combining the tools from the preceding sections, the framework covers products (cylinders, tori, hypercubes), quotients of products (Möbius strips, Klein bottles, projective spaces), and single-factor manifolds such as hyperspheres $S^{n-1}$ (via the von Mises-Fisher distribution). Because $SO(3) \cong \mathbb{RP}^3 = S^3/\mathbb{Z}_2$, even rotational latent spaces fall within the achievable class. Discrete factors can also be included via the Gumbel-Softmax relaxation (Section~\ref{sec:reparametrization}). Several limitations remain, however. The topology must be specified a priori; we do not learn it from data. Disconnected manifolds require mixture priors~\cite{Dilokthanakul2017}, breaking KL decoupling and requiring techniques such as importance-weighted bounds~\cite{burda2016}. Finally, manifolds whose universal covering space is not a product of our elementary factors (such as higher-genus surfaces and hyperbolic manifolds) cannot be represented exactly. Nevertheless, many manifolds of practical interest fall within the achievable classes.

The next section empirically validates the framework on several such topologies.

\section{Experiments}
\label{sec:experiments}
We validate the framework in two parts. Section~\ref{sec:exp_synthetic} presents synthetic experiments on cylinders, tori, and Möbius strips, where the true latent space is known, so topological fidelity can be assessed quantitatively. Sections~\ref{sec:exp_rotated_mnist} and~\ref{sec:exp_shifted_mnist} demonstrate the framework on manipulations of MNIST images, specifically rotation and translation, which both induce periodic latent coordinates. Before showing results, we describe the evaluation metrics in Section~\ref{sec:metrics}.

Because different latent topologies interact differently with the KL regularization, a fixed $\beta$ does not yield a fair comparison. We therefore tune $\beta$ separately for each model so that the \emph{unweighted} KL divergence $\text{KL}(q_\phi \| p)$ is approximately equal across all models within each experiment. This ensures that the models operate at comparable regularization strength and that differences in reconstruction or topology metrics are not artifacts of one model being less regularized than another. The resulting $\beta$ values and unweighted KL terms are reported in each experiment's table. All experiments use an 80/20 train/test split, and each topology-aware model is compared to a Gaussian VAE baseline with the same total latent dimensionality $\ell$.

\subsection{Evaluation Metrics}
\label{sec:metrics}
The tables in each experiment report five quantities. The first two, train and test RMS reconstruction error, measure how well the VAE reconstructs observations. The third, the unweighted KL divergence, quantifies regularization strength. These three quantities are standard but do not directly assess whether the learned latent space respects the assumed topology. A model with low reconstruction error might still scramble the latent geometry, and the KL value alone does not reveal whether the prior is a good match for the learned representation. To capture these aspects, we introduce two additional metrics.

The first added metric tests whether the prior covers the latent space meaningfully. If the prior matches the learned latent structure, then a sample from the prior should decode to a plausible observation that the encoder can map back to a similar region. To test this, we sample $z \sim p(z)$, decode to $\hat{y}$, re-encode to $z'$, re-decode to $\hat{y}'$, and compute $\text{RMS}(\hat{y}, \hat{y}')$. We call this the \textit{prior consistency} error. When the prior mismatches the latent topology, samples can land in regions the encoder never maps to, producing high prior consistency error.

The second added metric tests whether pairwise distances between latent variables reflect the true distances on the data manifold. Since our synthetic experiments provide the true latent variables $x_i$, we can compute geodesic distances on $\mathcal{M}$ and compare them to geodesic distances between latent variables. A natural measure for this comparison is Kruskal's stress-1~\cite{Kruskal1964}, which is standard in multidimensional scaling:
\begin{equation}
\sigma_{\text{geo}} = \sqrt{\frac{\sum_{i \neq j} (\bar{d}_{ij} - \bar{\delta}_{ij})^2}{\sum_{i \neq j} \bar{d}_{ij}^2}},
\end{equation}
where $d_{ij}$ is the geodesic distance between true latent variables $x_i, x_j$ on $\mathcal{M}$ and $\delta_{ij}$ is the geodesic distance between latent variables $z_i, z_j$ on the model's latent space. Bars denote mean-normalization: $\bar{d} = d / \mathbb{E}[d]$, which removes scale differences. We call this the \textit{geodesic stress}. For the cylinder, the latent geodesic uses the product metric on $S^1 \times [0,1]$; for the torus, the product of two angular distances. For quotient manifolds, the geodesic distance is the minimum over the group orbit: $d_{\mathcal{M}}(z_1, z_2) = \min_{g \in G} d_{\widetilde{\mathcal{Z}}}(z_1, g \cdot z_2)$. For the Möbius strip with $G = \mathbb{Z}_2$, this amounts to computing two covering-space distances and taking the smaller one. Lower stress indicates better preservation of the manifold's distance structure.

\subsection{Synthetic Validation: Cylinder, Torus, and M\"{o}bius Strip}
\label{sec:exp_synthetic}
\begin{figure*}[t]
    \centering
    \begin{minipage}{\textwidth}
        \centering
        \includegraphics[scale=0.33]{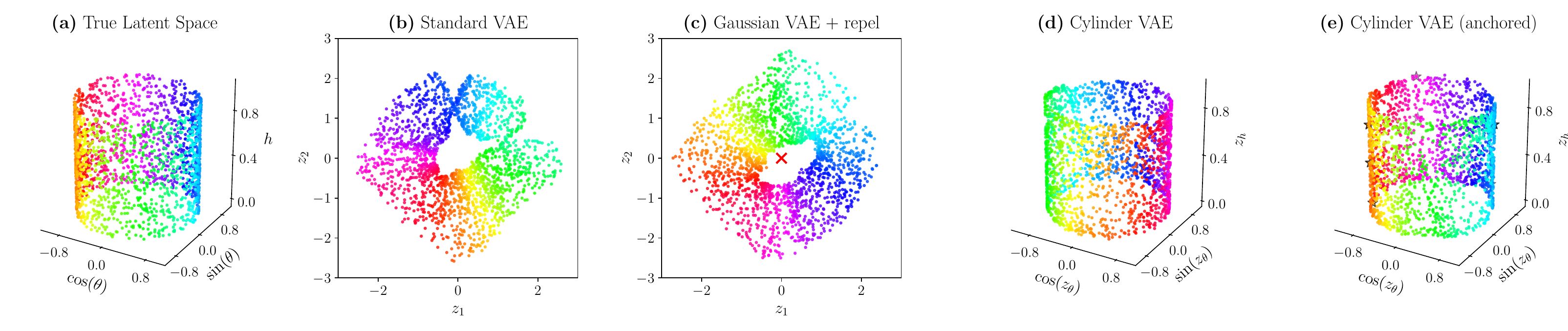}
        \par\vspace{0.4ex}
        \textbf{(a) Cylinder} $S^1 \times [0,1]$: true factors, Gaussian VAE, Cylinder VAE, Cylinder VAE (anchored). Anchor points marked~$\star$.
    \end{minipage}
    \vspace{1.2ex}

    \begin{minipage}{\textwidth}
        \centering
        \includegraphics[scale=0.33]{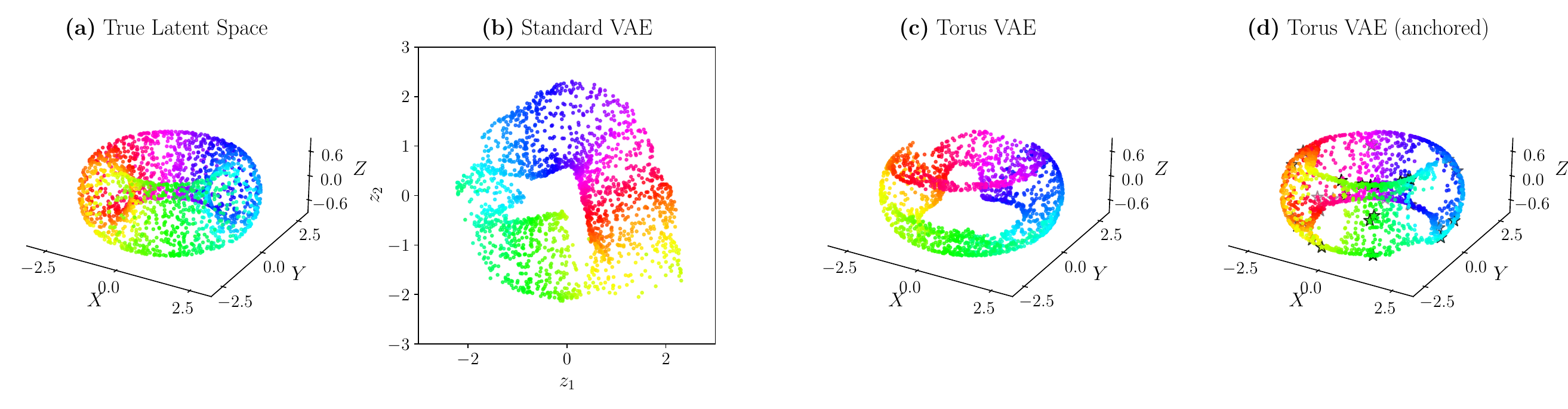}
        \par\vspace{0.4ex}
        \textbf{(b) Torus} $T^2 = S^1 \times S^1$: true factors on the torus surface, Gaussian VAE, Torus VAE, Torus VAE (anchored). Anchor points marked~$\star$.
    \end{minipage}
    \vspace{1.2ex}

    \begin{minipage}{\textwidth}
        \centering
        \includegraphics[scale=0.33]{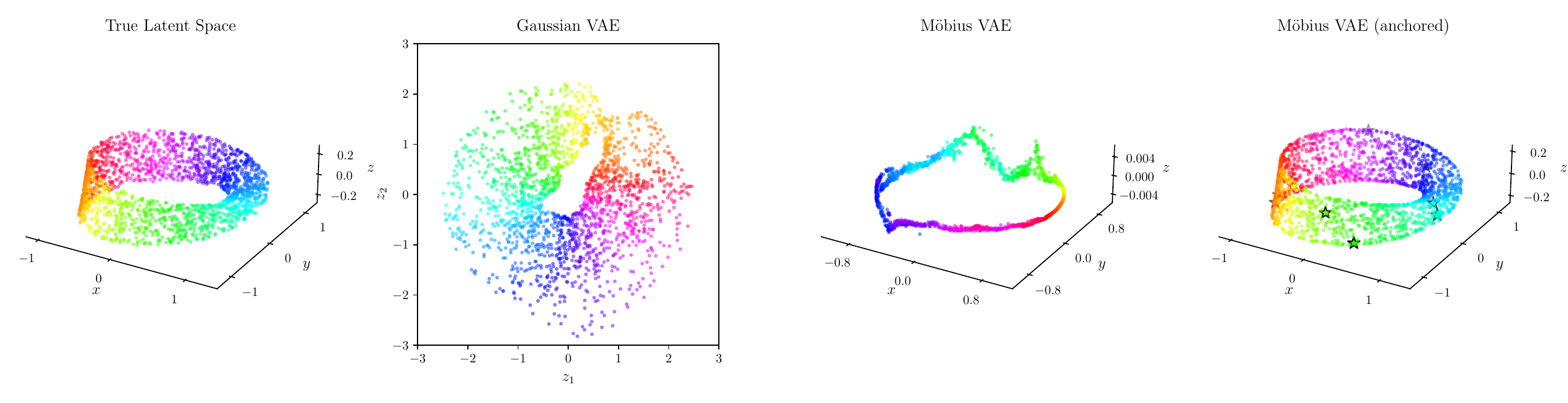}
        \par\vspace{0.4ex}
        \textbf{(c) M\"{o}bius strip}: true factors, Gaussian VAE, M\"{o}bius VAE (unanchored), M\"{o}bius VAE (anchored). Points are canonicalized to the fundamental domain and embedded on the strip surface; rainbow coloring by angular coordinate shows the smoothness of the learned representation.
    \end{minipage}
    \caption{Learned latent representations for the three synthetic experiments (columns: true factors, Gaussian baseline, topology-aware VAE, topology-aware VAE with anchoring). Three-dimensional panels show embeddings in ambient $\mathbb{R}^3$ space. For the cylinder~(a), the embedding is given by $(\cos\theta, \sin\theta, h)$ or $(\cos z_\theta, \sin z_\theta, z_h)$. For the torus~(b), the embeddings use standard parametrizations with ambient coordinate axes. The M\"{o}bius strip in Row~(c) first maps each point to its canonical representative in the fundamental domain $[0,1] \times [-\pi, \pi)$, then applies the standard embedding $(\cos\theta, \sin\theta, h)$ to place it on the strip surface.}
    \label{fig:synthetic}
\end{figure*}
For each of the three target manifolds, we draw $N = 2000$ true latent variables $x \in \mathcal{M}$ uniformly and map each to an observation $y \in \mathbb{R}^{50}$ through a fixed, nonlinear and injective generative network $f_\star$ (three layers with tanh and LeakyReLU nonlinearities, fixed weights) with additive Gaussian noise of standard deviation 0.1, as in~\eqref{eq:generative}. The network is shared across all three experiments, only the input coordinates differ. Models are trained with $\beta$ tuned so that the unweighted KL is approximately equal within each experiment. Table~\ref{tab:metrics_synthetic} collects reconstruction and topology metrics across all three experiments. Figure~\ref{fig:synthetic} shows the corresponding learned latent representations.

\begin{table*}[t]
\caption{Synthetic experiments: reconstruction and topology metrics. Lower is better for all quantities. Models within each group are evaluated at matched KL divergence.}
\label{tab:metrics_synthetic}
\centering
\begin{tabular}{llccccc}
\hline
\textbf{Topology} & \textbf{Model} & \textbf{Train RMS} & \textbf{Test RMS} & \textbf{KL} & \textbf{Consistency RMS} & \textbf{Geodesic Stress} \\
\hline
\multirow{4}{*}{Cylinder $S^1\!\times\![0,1]$}
 & Gaussian            & 0.141 & 0.150 & 4.36 & 0.113 & 0.296 \\
 & Gaussian + repel    & 0.140 & 0.143 & 4.35 & 0.100 & 0.333 \\
 & Cylinder            & 0.140 & 0.148 & 4.37 & 0.085 & 0.082 \\
 & Cylinder (anch.)    & 0.137 & 0.144 & 4.40 & 0.086 & 0.065 \\
\hline
\multirow{3}{*}{Torus $T^2$}
 & Gaussian            & 0.165 & 0.188 & 5.03 & 0.131 & 0.394 \\
 & Torus               & 0.159 & 0.173 & 5.17 & 0.175 & 0.250 \\
 & Torus (anch.)       & 0.156 & 0.173 & 5.12 & 0.126 & 0.231 \\
\hline
\multirow{3}{*}{M\"{o}bius strip}
 & Gaussian            & 0.117 & 0.122 & 4.37 & 0.112 & 0.327 \\
 & M\"{o}bius              & 0.167 & 0.209 & 3.93 & 0.078 & 0.294 \\
 & M\"{o}bius (anch.)      & 0.116 & 0.128 & 4.48 & 0.100 & 0.067 \\
\hline
\end{tabular}
\end{table*}

\subsubsection{Cylinder}
\label{sec:exp_cylinder}
The cylinder $S^1 \times [0,1]$ is the simplest mixed-product example: one periodic coordinate and one bounded coordinate. We draw $\theta \sim \mathcal{U}(-\pi, \pi)$ and $h \sim \mathcal{U}(0, 1)$, embed the pair in $\mathbb{R}^3$ as $(\cos\theta, \sin\theta, h)$, and apply $f_\star$. Four models are compared: a standard Gaussian VAE; a Gaussian VAE augmented with a repelling anchor at the origin (soft topological hole); an unanchored Cylinder VAE with a Wrapped Normal encoder for $\theta$ and a Kumaraswamy encoder for $h$; and an anchored version with eight corner anchors at $\theta \in \{-\pi, -\frac{\pi}{2}, 0, \frac{\pi}{2}\}$ and $h \in \{0, 1\}$.

Figure~\ref{fig:synthetic}(a) shows the learned representations. The Gaussian VAE cannot represent the periodic and bounded structure of the cylinder: the angular coordinate wraps without constraint and the height is unbounded in the latent space, so points that coincide at $\theta = \pm\pi$ are assigned different latent positions. The Cylinder VAE correctly captures both structural features of the target: the height coordinate is confined to $[0,1]$ by the Kumaraswamy support and the angular coordinate is periodic. Without anchoring, however, the coordinate system is arbitrary and the learned angular frame may be rotated relative to the true one; adding anchor constraints aligns the learned frame with the true latent variables.

At matched KL ($\approx 4.4$), consistency error drops by 25\% (0.085 vs.\ 0.113) and geodesic stress drops by 72\% (0.082 vs.\ 0.296). The two metrics tell complementary stories: the lower consistency error means the cylindrical prior covers the learned latent space rather than placing probability mass in unused regions, while the lower geodesic stress means the intrinsic distance structure of the data manifold is better preserved. The repelling-anchor Gaussian baseline slightly improves consistency (0.100 vs.\ 0.113) but its geodesic stress (0.333) is worse than the standard Gaussian, confirming that a soft hole at the origin cannot substitute for the correct topology. Anchoring the Cylinder VAE further reduces geodesic stress to 0.065.

\subsubsection{Torus}
\label{sec:exp_torus}
The torus $T^2 = S^1 \times S^1$ is a purely periodic product: both latent coordinates are angles. We draw $\theta_1, \theta_2 \sim \mathcal{U}(-\pi, \pi)$, embed the pair on a torus surface in $\mathbb{R}^3$ via $((R + r\cos\theta_2)\cos\theta_1,\ (R + r\cos\theta_2)\sin\theta_1,\ r\sin\theta_2)$ with major radius $R = 2$ and minor radius $r = 0.8$, and apply $f_\star$. Three models are compared: a Gaussian VAE; a Torus VAE with Wrapped Normal encoders for both angular coordinates; and an anchored Torus VAE with 16 anchor points on a $4 \times 4$ grid over $[-\pi, \pi]^2$.

Figure~\ref{fig:synthetic}(b) shows the learned representations. The Gaussian VAE produces boundary artifacts at $\theta_1, \theta_2 \approx \pm\pi$: points that are close on the torus surface but near opposite ends of the angular range are placed far apart in the Gaussian latent space, because the Gaussian prior does not wrap either dimension. The Torus VAE distributes points smoothly across the full $[-\pi, \pi]^2$ range.

At matched KL ($\approx 5.1$), geodesic stress falls by 37\% (0.250 vs.\ 0.394). Note that the unanchored Torus VAE scores worse in terms of consistency RMS (0.175) than the Gaussian baseline (0.131), despite having a topologically matched prior. The explanation is that the uniform prior on $T^2$ assigns equal probability to all angular configurations, so the encoder is free to rotate the coordinate frame to any orientation. Without a fixed reference, the decoder must generalize across all such rotations simultaneously, which is harder than fitting a single fixed frame. Anchoring resolves this ambiguity and yields the best consistency (0.126) together with the lowest geodesic stress (0.231).

\subsubsection{M\"{o}bius Strip}
\label{sec:exp_mobius}
The M\"{o}bius strip is homeomorphic to $([0,1] \times S^1) / \mathbb{Z}_2$ under the identification $(h, \theta) \sim (1-h, \theta + \pi)$. We generate observations by sampling $\theta \sim \mathcal{U}(-\pi, \pi)$ and $h \sim \mathcal{U}(0, 1)$ on the covering cylinder, computing the $\mathbb{Z}_2$-invariant features $\rho(h, \theta)$ from~\eqref{eq:mobius_invariant}, and applying $f_\star$ to those features. Because $\rho(h, \theta) = \rho(1-h, \theta + \pi)$, the two covering-space preimages of each M\"{o}bius point produce the same observation up to noise, which is the correct generative semantics for the quotient identification.

The M\"{o}bius VAE encodes into the covering cylinder $S^1 \times [0,1]$ and passes the $\mathbb{Z}_2$-invariant features to the decoder. Because each M\"{o}bius point has two preimages, the KL divergence requires a correction: writing $r = q_C(\tau(h,\theta)) / q_C(h,\theta)$ for the density ratio at identified points (where $\tau(h, \theta) = (1-h, \theta+\pi)$ is the deck transformation), the adjusted M\"{o}bius KL is
\begin{equation}
\text{KL}(q_M \| p_M) = \text{KL}(q_C \| p_C) - \log 2 + \mathbb{E}[\log(1 + r)].
\end{equation}
See Appendix~\ref{app:mobius_kl} for the derivation.

Figure~\ref{fig:synthetic}(c) shows the learned representations. The Gaussian VAE cannot capture the non-orientable structure of the M\"{o}bius strip at all. The M\"{o}bius VAE without anchoring correctly uses the M\"{o}bius topology in principle, but across many random seeds it frequently collapses the height coordinate toward $h \approx 0.5$, effectively discarding it to reduce the KL term. This degeneracy produces train RMS 0.167, well above the Gaussian baseline (0.117). The collapsed representation stays internally self-consistent (consistency RMS 0.078) while reconstructing poorly, because it has discarded the height coordinate. The anchored M\"{o}bius VAE does not exhibit this collapse and achieves train RMS 0.116, on par with the Gaussian while preserving the full M\"{o}bius structure. An interesting aspect about the anchored model is that the anchor penalty is a non-negative addition to the objective. For this reason, the global optimum cannot be lowered by adding it. The RMS improvement therefore implies that the anchors steer optimization away from the degenerate local solution that traps the unanchored model. With the collapse resolved, the anchored model also reduces geodesic stress sharply relative to the Gaussian (0.067 vs.\ 0.327) and improves prior consistency (0.100 vs.\ 0.112).

\subsubsection{KL Sweep Analysis}
\label{sec:kl_sweep}
\begin{figure*}[t]
    \centering
    \includegraphics[scale=0.43]{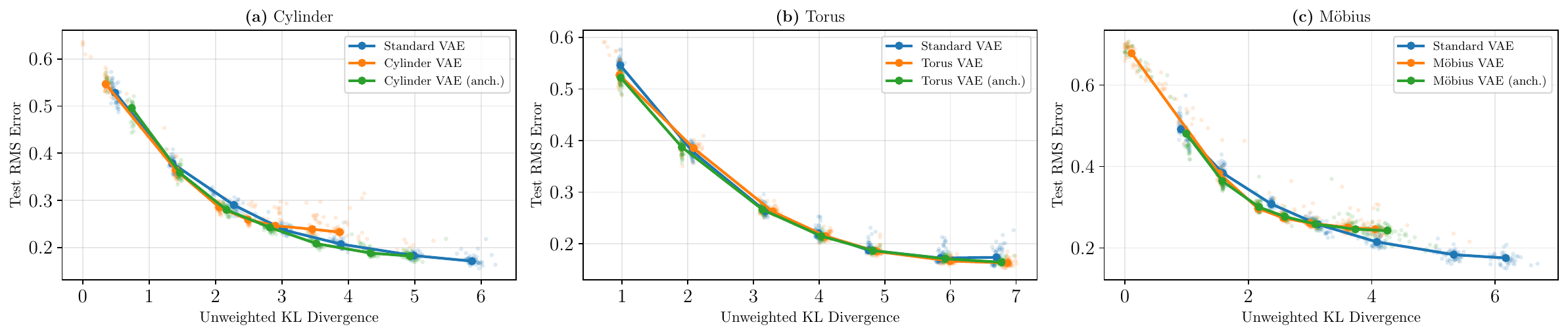}
    \caption{KL sweep for the three synthetic topologies. Each subplot varies $\beta$ from 0.01 to 10.0 and plots reconstruction RMS vs.\ KL divergence for the topology-aware VAE and the Gaussian baseline: (a)~Cylinder $S^1\times[0,1]$, (b)~Torus $T^2$, (c)~M\"{o}bius strip. Shaded bands show the 25th--75th percentile over 50 seeds. The torus topology-aware VAE Pareto-dominates the Gaussian baseline; for cylinder and M\"{o}bius strip, the advantage holds once $\beta$ exceeds a small topology-dependent threshold.}
    \label{fig:kl_sweep}
\end{figure*}
To assess whether the topology advantage persists across the full reconstruction/regularization tradeoff, we vary the KL weight $\beta$ from 0.01 to 10.0 and train both the topology-aware VAE and the Gaussian baseline at each setting, averaging over 50 random seeds per topology~\cite{Burgess2018}. Figure~\ref{fig:kl_sweep} plots reconstruction RMS against unweighted KL divergence for each topology.

For the torus, the topology-aware VAE Pareto-dominates the Gaussian baseline across essentially all $\beta$ values. For the cylinder and M\"{o}bius strip, the topology-aware VAE outperforms the Gaussian once $\beta$ exceeds a small threshold (below 0.1--0.3 for both). Only at very low $\beta$, where the KL term is negligible relative to reconstruction, does the Gaussian baseline match or slightly outperform the topology-aware model. For the M\"{o}bius strip the anchored variant is the relevant comparison: the unanchored model suffers from the height collapse and does not consistently outperform the Gaussian at moderate $\beta$. These results confirm that the benefit of topology shaping is structural rather than specific to a particular $\beta$ setting, and that it is present whenever the KL term plays a meaningful role in training.

The synthetic experiments validated the framework on low-dimensional observations generated from known manifolds. We now turn to real images to assess whether the topology benefits transfer to a higher-dimensional, more complicated setting.

\subsection{Rotated MNIST ($\mathbb{R}^2 \times S^1$)}
\label{sec:exp_rotated_mnist}
In this experiment, each image is a $28 \times 28$ MNIST digit rotated by a random angle $\theta \sim \mathcal{U}(-\pi, \pi)$. The two generative factors are the digit class and the rotation angle, a single $S^1$ factor. We use digit classes 6 and 7, which are both asymmetric under rotation so that the $S^1$ coordinate is identifiable, with 500 exemplars per class, each rotated 180 times, giving $N = 180{,}000$ images with $y_i \in \mathbb{R}^{784}$.

All three models have the same total latent dimensionality $\ell = 3$. For the custom VAE, the learned latent variable is $z = (z_1, z_2, z_\theta)$, where $z_\theta \in S^1$ is the circular coordinate and $(z_1, z_2) \in \mathbb{R}^2$ are the Euclidean style coordinates; for the Gaussian, all three coordinates are Euclidean. The three models are:
\begin{enumerate}
    \item Gaussian VAE with $\ell = 3$, which yields a latent topology of $\mathbb{R}^3$
    \item Unanchored Mixed-Circle VAE, which yields a latent topology of $\mathbb{R}^2 \times S^1$;
    \item Anchored Mixed-Circle VAE, which yields a latent topology of $\mathbb{R}^2 \times S^1$.
\end{enumerate}
The Mixed-Circle VAE uses a Wrapped Normal encoder for $z_\theta$ and a standard Gaussian encoder for $(z_1, z_2)$. The anchors are applied only on the circular coordinate, see Section~\ref{sec:anchoring}, with 30 evenly spaced target angles. The coordinates $(z_1, z_2)$ are unconstrained and organize by digit class through reconstruction pressure alone.

Figure~\ref{fig:rotated_mnist} compares the three models in four rows. The first two rows confirm that the rotation angle and digit class are encoded in the latent space. The Circle VAE assigns the digit rotation to a native $S^1$ coordinate $z_\theta$, while the Gaussian distributes it across all three Euclidean dimensions. We see that the anchored version retains the orientation of the true data. Observe from row 2 that all models separate the two digit classes into clearly separated clusters. Row 3 shows a 12-step decoded ring per digit class: the style coordinates $(z_1, z_2)$ are fixed at the class centroid in latent space and $z_\theta$ is swept uniformly over $S^1$. Fixing the centroid isolates the rotation factor and restricts the displayed digits to a single class. The Circle VAE produces coherent digits at every angle and wraps back cleanly, while the Gaussian does not. Row 4 shows decoded geodesics: two endpoints from the same digit class are linearly interpolated in latent space and decoded at each step. The Circle VAE produces in-distribution reconstructions throughout, while the Gaussian does not.

Table~\ref{tab:metrics_rotated_mnist} reports reconstruction and topology metrics. At matched KL (approximately 15--17 nats), the anchored Mixed-Circle VAE achieves lower reconstruction error than the Gaussian (train RMS 0.115 versus 0.121), with prior-consistency error falling from 0.117 to 0.043 and geodesic stress falling from 0.625 to 0.325, showcasing the advantage of the correct topology in a real-data setting.

\begin{table}[t]
\caption{Rotated MNIST Experiment: Reconstruction and Topology Metrics. Lower is better for all metrics.}
\label{tab:metrics_rotated_mnist}
\centering
\begin{tabular}{lccccc}
\hline
\textbf{Model} & \textbf{Train} & \textbf{Val} & \textbf{KL} & \textbf{Consistency} & \textbf{Geodesic} \\
 & \textbf{RMS} & \textbf{RMS} & & \textbf{RMS} & \textbf{Stress} \\
\hline
\shortstack[l]{Gaussian\\($\ell=3$)} & 0.121 & 0.126 & 16.54 & 0.117 & 0.625 \\
\shortstack[l]{Mixed-Circle\\VAE} & 0.119 & 0.125 & 15.46 & 0.076 & 0.501 \\
\shortstack[l]{Mixed-Circle\\VAE (anch.)} & 0.115 & 0.122 & 15.69 & 0.043 & 0.325 \\
\hline
\end{tabular}
\end{table}

\begin{figure*}[t]
    \centering
    \includegraphics[width=\textwidth]{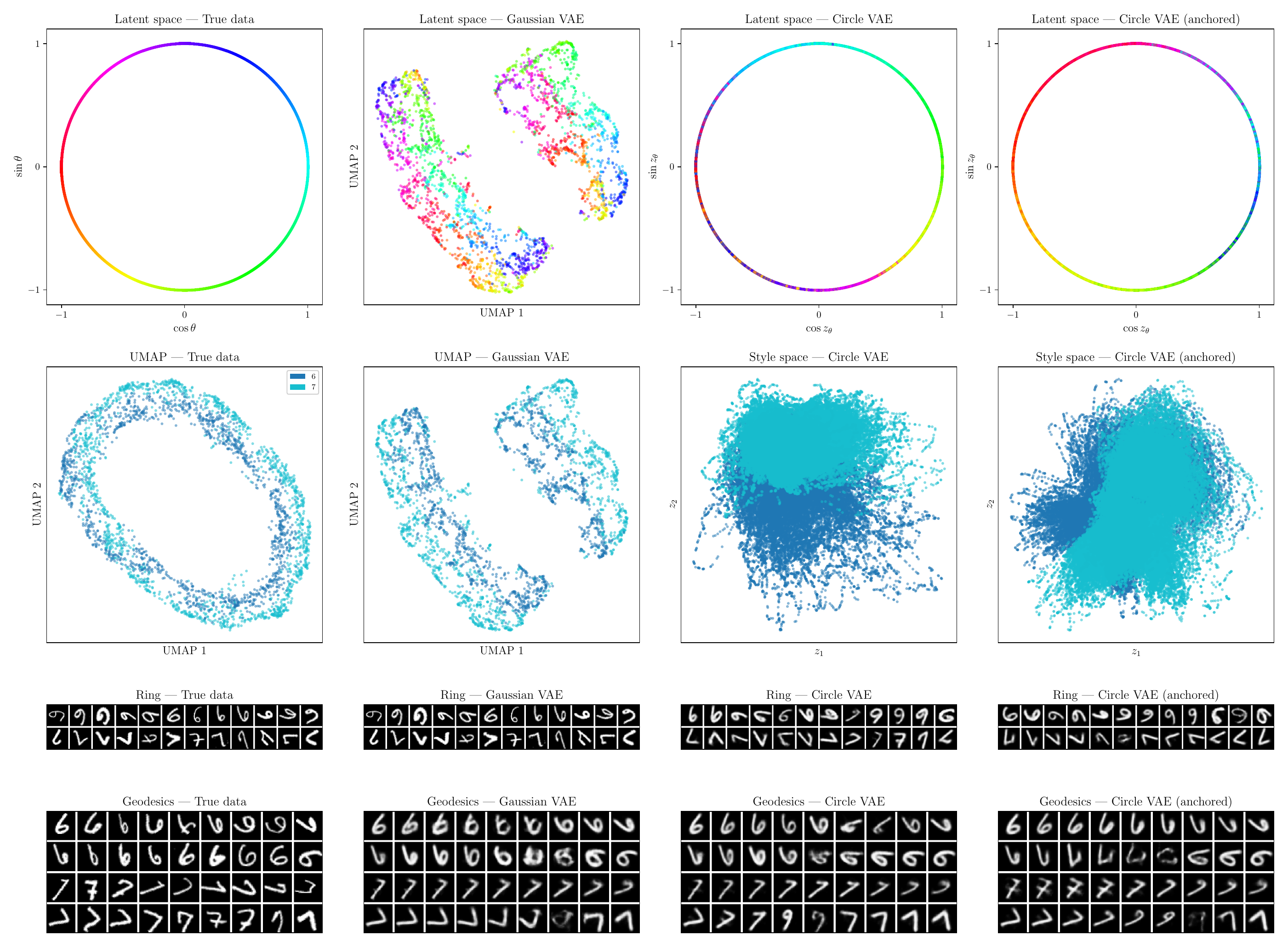}
    \caption{Rotated MNIST experiment ($\mathbb{R}^2 \times S^1$ latent space, digit classes 6 and 7). Columns: ground truth, Gaussian VAE ($\ell=3$), Mixed-Circle VAE, and Mixed-Circle VAE (anchored). First row: latent coordinates colored by the true rotation angle $\theta$. for the Mixed-Circle VAEs, the circular coordinates $(\cos z_\theta, \sin z_\theta)$ are plotted; for the Gaussian, a UMAP projection of the full latent space is shown instead; for the ground-truth column, $(\cos\theta, \sin\theta)$ is plotted. Second row: latent coordinates colored by digit class. The Mixed-Circle VAE shows the Euclidean coordinates $(z_1, z_2)$, while a UMAP projection is shown for the ground-truth and Gaussian columns. Third row: 12-step ring for each digit class (3 and 4), sweeping $z_\theta$ for the Circle VAEs or the proxy $\mathrm{atan2}(z_1, z_2)$ for the Gaussian, with style coordinates $(z_1, z_2)$ fixed at the class centroid. Fourth row: decoded geodesics between two endpoints from the same digit class, linearly interpolated in latent space.}
    \label{fig:rotated_mnist}
\end{figure*}

\subsection{Shifted MNIST ($\mathbb{R}^2 \times T^2$)}
\label{sec:exp_shifted_mnist}
In this experiment, images are generated by cyclically shifting MNIST digits from classes 3, 4, and 7 by random integer pixel offsets $(d_x, d_y)$ with $d_x, d_y \in \{0, \ldots, 27\}$ using \texttt{numpy.roll}. The true latent coordinates are $\theta_i = (2\pi d_i/28 + \pi) \bmod 2\pi - \pi \in [-\pi, \pi)$ for $i \in \{1, 2\}$, placing the pair $(\theta_1, \theta_2)$ exactly on $T^2 = S^1 \times S^1$. We use 500 exemplars per digit class with 100 random shifts each, giving $N = 150{,}000$ images.

All three models have the same total latent dimensionality $\ell = 4$. The Mixed-Torus VAE has learned latent variable $z = (z_1, z_2, z_{\theta_1}, z_{\theta_2})$, with Wrapped Normal encoders on the circle coordinates $z_{\theta_1}, z_{\theta_2} \in S^1$ and a standard Gaussian encoder on the style coordinates $(z_1, z_2) \in \mathbb{R}^2$. The Gaussian VAE uses a flat $\mathbb{R}^4$ prior.

As in the rotated MNIST experiment (Section~\ref{sec:exp_rotated_mnist}), only the angular coordinates $z_{\theta_1}$ and $z_{\theta_2}$ are anchored, while the style coordinates $(z_1, z_2)$ are unconstrained. Anchor targets correspond to pixel shifts $(d_x, d_y) \in \{0, 7, 14, 21\}^2$, giving 16 exact angular targets per digit class.

Figure~\ref{fig:shifted_mnist} compares the three models in four rows. The first two rows confirm that both shift angles and the digit class are encoded well by all VAEs. The Mixed-Torus VAEs assign both shift angles to native $S^1$ coordinates, while the Gaussian distributes them across all four Euclidean dimensions. The anchored VAE provides a significant benefit in terms of interpretability over the unanchored version, as the anchored model's learned coordinates align exactly to the true parameter space. Row 2 shows that all models separate the three digit classes. Row 3 shows a $10 \times 10$ decoded grid, using the same class-centroid approach as in Section~\ref{sec:exp_rotated_mnist}. The anchored Mixed-Torus VAE almost exactly reproduces the ground-truth shift pattern, while the Gaussian shows no periodic structure. Row 4 shows decoded geodesics for each digit class, constructed as in Section~\ref{sec:exp_rotated_mnist}; the Mixed-Torus VAE produces smooth interpolations throughout, while the Gaussian does not.

Table~\ref{tab:metrics_shifted_mnist} reports reconstruction and topology metrics. At matched KL (approximately 19--21 nats), reconstruction quality is similar across all models (train RMS 0.153--0.159). The anchored Mixed-Torus VAE attains the lowest prior-consistency error (0.086, compared to 0.135 for the Gaussian and 0.114 for the unanchored model) and the lowest geodesic stress (0.164, compared to 0.362 for the unanchored model and 0.404 for the Gaussian).

\begin{table}[t]
\caption{Shifted MNIST Experiment: Reconstruction and Topology Metrics. Lower is better for all metrics.}
\label{tab:metrics_shifted_mnist}
\centering
\begin{tabular}{lccccc}
\hline
\textbf{Model} & \textbf{Train} & \textbf{Val} & \textbf{KL} & \textbf{Consistency} & \textbf{Geodesic} \\
 & \textbf{RMS} & \textbf{RMS} & & \textbf{RMS} & \textbf{Stress} \\
\hline
\shortstack[l]{Gaussian\\($\ell=4$)} & 0.159 & 0.173 & 19.70 & 0.135 & 0.404 \\
\shortstack[l]{Mixed-Torus\\VAE} & 0.155 & 0.171 & 20.69 & 0.114 & 0.362 \\
\shortstack[l]{Mixed-Torus\\VAE (anch.)} & 0.153 & 0.174 & 19.58 & 0.086 & 0.164 \\
\hline
\end{tabular}
\end{table}

\begin{figure*}[t!]
    \centering
    \includegraphics[width=\textwidth]{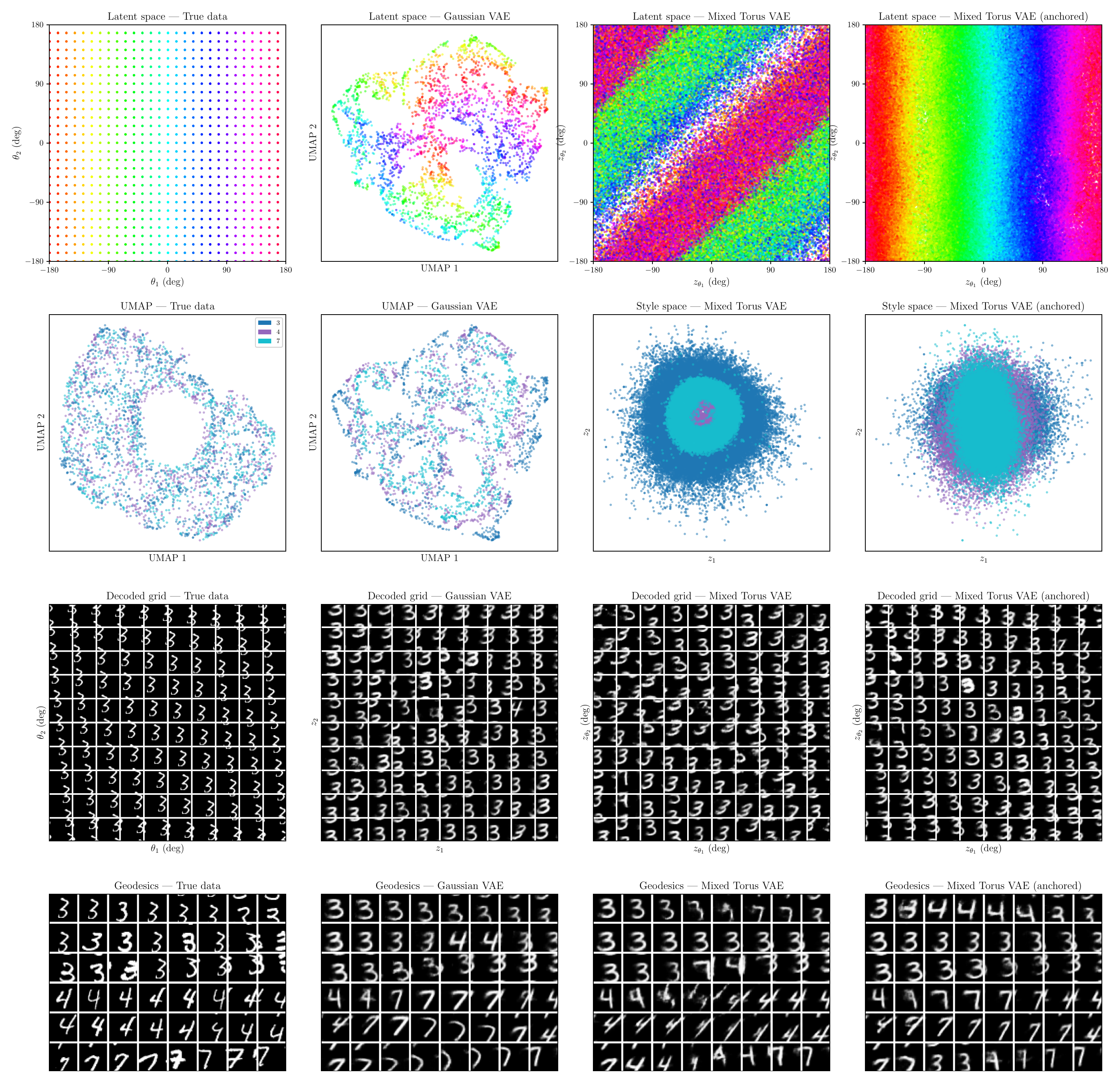}
    \caption{Shifted MNIST experiment ($\mathbb{R}^2 \times T^2$ latent space, digit classes 3, 4, and 7). Columns: ground truth, Gaussian VAE ($\ell=4$), Mixed-Torus VAE, and Mixed-Torus VAE (anchored). First row: latent coordinates colored by the true shift angle $\theta_1$. The ground-truth column plots $(\theta_1, \theta_2)$ directly. The Mixed-Torus VAEs plot the learned circular coordinates $(z_{\theta_1}, z_{\theta_2})$. The Gaussian column shows a UMAP projection of the full latent space. Second row: latent coordinates colored by digit class. The Mixed-Torus VAEs show the Euclidean style coordinates $(z_1, z_2)$. The ground-truth and Gaussian columns show a UMAP projection. Third row: $10 \times 10$ decoded grid, shown for digit 3. The Mixed-Torus VAEs sweep both circular coordinates $(z_{\theta_1}, z_{\theta_2})$ with style coordinates fixed at the digit-3 centroid. The Gaussian VAE sweeps its first two Euclidean dimensions $(z_1, z_2)$, with the remaining dimensions fixed at the digit-3 centroid. Fourth row: decoded geodesic paths, one per digit class, with style coordinates fixed at the class centroid.}
    \label{fig:shifted_mnist}
\end{figure*}

\section{Discussion and Conclusions}
\label{sec:discussion}
We have presented a framework for designing VAE latent spaces with prescribed non-Euclidean topologies. The design procedure has four steps: decompose a target manifold into elementary factors, assign a compatible encoder-prior distribution pair to each, construct a feature map $\rho$ that embeds latent coordinates in Euclidean space for the decoder, and add anchor constraints when coordinate alignment is needed. Factorized distributions yield product topologies with closed-form KL divergences, and $G$-invariant feature maps handle quotient identifications (Proposition~\ref{prop:quotient}). All transformations are differentiable, preserving gradient flow throughout.

Across all five experiments, the anchored topology-aware model improves both geodesic stress and prior consistency relative to the Gaussian baseline. Anchoring the coordinates has three benefits. It aligns the learned coordinates with the true generative factors for a more interpretable latent space. It resolves coordinate ambiguities that arise even under a matched prior, such as the rotational symmetry of the torus. It also steers optimization away from degenerate minima, as in the M\"{o}bius strip, where the unanchored model collapses the height coordinate. A KL sweep across 50 seeds confirms that this advantage is not specific to a single $\beta$. The topology-aware models achieve lower reconstruction error at every active regularization level, and the gap closes only when $\beta$ is negligible and the prior has no influence on the representation.

Two lessons can be taken from the experiments. First, a prior whose support matches the data manifold is necessary but not sufficient for a faithful representation. The coordinate frame within that support must also be pinned down by anchoring, as the torus and Möbius results make clear. Second, the improvements in the prior-consistency metric imply increased generative quality. Samples from the prior decode to observations that re-encode to the same region, so a low value means prior samples that stay consistent with the data. The MNIST experiments demonstrate this, as the topology-aware models decode plausible digits across the whole manifold while the Gaussian fails at the periodic boundary.

Several directions for future work remain. The topology must currently be specified a priori. Learning it from data instead is an open problem. The invertible transformation perspective connects naturally to normalizing flows~\cite{Brehmer2020}, where the transformation is learned rather than fixed. The quotient topology approach could be extended to continuous symmetry groups or to learning the group action from data. Combining topology shaping with disentanglement objectives may enable latent spaces that are both topologically faithful and interpretable.

\bibliographystyle{IEEEtran}
\bibliography{references}

\appendices
\section{KL Divergence Derivations}
\label{app:kl}
This appendix provides the proof of Proposition~\ref{prop:kl_decoupling}, closed-form KL divergences for the distribution pairs in Table~\ref{tab:distributions}, and the derivation of the Möbius strip KL divergence, which requires special treatment due to the quotient topology.

\subsection{KL Divergence Decoupling (Proof of Proposition~\ref{prop:kl_decoupling})}
\label{app:kl_decoupling_proof}
By definition of KL divergence and the factorization assumption:
\begin{align}
\text{KL}(q \| p) &= \int \prod_{j=1}^\ell q_j(z_j) \log \frac{\prod_{j=1}^\ell q_j(z_j)}{\prod_{j=1}^\ell p_j(z_j)} \, dz \nonumber \\
&= \int \prod_{j=1}^\ell q_j(z_j) \sum_{k=1}^\ell \log \frac{q_k(z_k)}{p_k(z_k)} \, dz \nonumber \\
&= \sum_{k=1}^\ell \int q_k(z_k) \log \frac{q_k(z_k)}{p_k(z_k)} \prod_{j \neq k} q_j(z_j) \, dz \nonumber \\
&= \sum_{k=1}^\ell \text{KL}(q_k \| p_k),
\end{align}
where the last step uses $\int q_j(z_j) \, dz_j = 1$ for all $j \neq k$.

\subsection{Gaussian-Gaussian KL}
The KL divergence between univariate Gaussians $q = \mathcal{N}(\mu_q, \sigma_q^2)$ and $p = \mathcal{N}(\mu_p, \sigma_p^2)$ is:
\begin{equation}
\text{KL}(q \| p) = \log\frac{\sigma_p}{\sigma_q} + \frac{\sigma_q^2 + (\mu_q - \mu_p)^2}{2\sigma_p^2} - \frac{1}{2}.
\end{equation}
For the standard prior $p = \mathcal{N}(0, 1)$ used in vanilla VAEs, this simplifies to $\frac{1}{2}(\mu_q^2 + \sigma_q^2 - 1 - \log \sigma_q^2)$.

\subsection{Exponential-Exponential KL}
For the half-infinite latent space $\mathbb{R}_{\geq 0}$, we use the Exponential distribution. Given $q = \text{Exp}(\lambda_q)$ and $p = \text{Exp}(\lambda_p)$ with rate parameters $\lambda_q, \lambda_p > 0$:
\begin{equation}
\text{KL}(q \| p) = \log\frac{\lambda_q}{\lambda_p} + \frac{\lambda_p}{\lambda_q} - 1.
\end{equation}
When $q = p$ (i.e., $\lambda_q = \lambda_p$), the KL divergence is zero as expected.

\subsection{Kumaraswamy-Uniform KL}
The Kumaraswamy distribution provides a tractable encoder for latent variables on bounded domains. For $q = \text{Kum}(\alpha, \beta)$ and the uniform prior $p = \mathcal{U}(0, 1)$, we obtain:
\begin{equation}
\text{KL}(q \| p) = \left(1 - \frac{1}{\beta}\right) + \left(1 - \frac{1}{\alpha}\right)\psi(\beta) + \log(\alpha\beta)
\end{equation}
where $\psi$ is the digamma function.

\subsection{Wrapped Normal-Uniform KL}
The Wrapped Normal distribution $q = \text{WN}(\mu, \sigma^2)$ on $S^1$ is constructed by wrapping a Gaussian onto the circle:
\begin{equation}
z = \arctan2(\sin(\mu + \sigma\epsilon), \cos(\mu + \sigma\epsilon)),
\end{equation}
where $\epsilon \sim \mathcal{N}(0,1)$. This construction inherits exact reparametrization from the Gaussian.

The KL divergence between $q = \text{WN}(\mu, \sigma^2)$ and the circular uniform prior $p = \mathcal{U}(S^1)$ admits a closed-form approximation. Since the uniform prior has constant density $p(z) = 1/(2\pi)$, the KL divergence simplifies to
\begin{align}
\text{KL}(q \| p) &= \int q(z) \log q(z) \, dz - \int q(z) \log p(z) \, dz \\
&= -H(q) + \log(2\pi),
\end{align}
where $H(q)$ denotes the differential entropy of the Wrapped Normal. For small $\sigma$, the wrapping has negligible effect and $q$ is approximately Gaussian. Using the Gaussian entropy $H(\mathcal{N}(\mu, \sigma^2)) = \frac{1}{2}\log(2\pi\sigma^2) + \frac{1}{2}$, we obtain
\begin{align}
\text{KL}(q \| p) &\approx \log(2\pi) - \frac{1}{2}\log(2\pi\sigma^2) - \frac{1}{2}\\
&= \frac{1}{2}\left(\log(2\pi) - \log(\sigma^2) - 1\right).
\end{align}
For large $\sigma$, the wrapped distribution approaches uniform on the circle, so $H(q) \to \log(2\pi)$ and $\text{KL}(q \| p) \to 0$. We combine these regimes by clamping:
\begin{equation}
\text{KL}(q \| p) \approx \max\left(0, \frac{1}{2}\left(\log(2\pi) - \log(\sigma^2) - 1\right)\right).
\end{equation}
This approximation is accurate when the wrapped normal is well-approximated by either a Gaussian ($\sigma \lesssim 1$) or uniform ($\sigma \gtrsim 2$). In the intermediate regime $\sigma \in [1, 2]$, the true KL divergence transitions smoothly to zero, hence the clamped formula provides a reasonable approximation throughout.

\subsection{Möbius Strip KL Divergence}
\label{app:mobius_kl}
The Möbius strip $M = (S^1 \times [0,1])/\mathbb{Z}_2$ is the quotient of the cylinder $C = S^1 \times [0,1]$ by the action $\tau(h, \theta) = (1-h, \theta+\pi)$. This action identifies opposite edges with a twist, creating the Möbius topology. The encoder distribution $q_C$ lives on the cylinder $C$, while the uniform prior $p_M$ lives on $M$. We derive the KL divergence $\text{KL}(q_M \| p_M)$, where $q_M$ is the pushforward of $q_C$ under the quotient map.

Let $p_C$ denote the uniform distribution on the cylinder. Since $M$ has half the area of $C$, we have $p_M([z]) = 2 p_C(\{z, \tau(z)\})$ for each equivalence class $[z] \in M$. The encoder density at $[z] \in M$ is $q_M([z]) = q_C(z) + q_C(\tau(z))$, summing contributions from both identified points.

The KL divergence on $M$ is given by
\begin{align}
\text{KL}(q_M \| p_M) &= \int_M q_M([z]) \log \frac{q_M([z])}{p_M([z])} \, d[z] \nonumber \\
&= \int_C q_C(z) \log \frac{q_C(z) + q_C(\tau(z))}{2 p_C(z)} \, dz \nonumber \\
&= \text{KL}(q_C \| p_C) - \log 2 + \mathbb{E}_{z \sim q_C}[\log(1 + r(z))],
\end{align}
where $r(z) := q_C(\tau(z))/q_C(z)$ is the density ratio at identified points. The correction term $-\log 2 + \mathbb{E}[\log(1 + r)]$ accounts for the quotient structure. When the encoder concentrates mass at one of the identified points only (occupying half the cylinder), we obtain $r \to 0$ and the correction approaches $-\log 2$, reflecting that $M$ has half the volume of $C$. When the encoder places equal mass on both identified points ($r = 1$), the correction vanishes. In practice, we compute the density ratio $r(z)$ in log-space for numerical stability, using the log-probabilities from the Wrapped Normal and Kumaraswamy distributions. The expectation $\mathbb{E}[\log(1 + r)]$ is approximated via Monte Carlo sampling from $q_C$.

\section{Constructing Feature Map $\rho$ Using the Reynolds Operator}
\label{app:reynolds}
Section~\ref{sec:quotient_topologies} introduced the Reynolds operator $R$ for constructing $G$-invariant feature maps $\rho$. Here we give the step-by-step construction procedure and work through the Möbius strip derivation in full.

To construct an orbit-separating feature map $\rho$ for a quotient $\mathcal{Z} = \widetilde{\mathcal{Z}}/G$, proceed as follows:
\begin{enumerate}
    \item Choose a basis of coordinate functions for the covering space. For each factor of $\widetilde{\mathcal{Z}}$, select functions $f_1, \ldots, f_m$ that naturally parameterize it:
    \begin{itemize}
        \item \emph{Circular factor $S^1$:} use $\{\cos\theta, \sin\theta\}$.
        \item \emph{Interval factor $[0,1]$:} use $\{h - \tfrac{1}{2}\}$.
        \item \emph{Unbounded factor $\mathbb{R}$ or $[0,\infty)$:} use $\{z\}$ (or $\{z - c\}$ centered at a symmetry-compatible point $c$).
    \end{itemize}
    The full basis for the covering space is the union of these per-factor functions. Good choices make the group action act simply (e.g., as sign flips) on as many basis elements as possible, which streamlines the subsequent steps.
    \item Form monomials in $f_1, \ldots, f_m$ up to some degree $d$.
    \item Apply $R$ to each monomial.
    \item Discard any result that vanishes identically, and remove linearly dependent features.
    \item Verify that the surviving features separate $G$-orbits (i.e., $\rho(z_1) = \rho(z_2)$ implies $z_1 \sim z_2$). If not, increase $d$ and repeat.
\end{enumerate}
Proposition~\ref{prop:quotient} guarantees termination at some finite degree $d$. All components of $\rho$ are polynomials or trigonometric polynomials by construction, so they are smooth and their gradients are available in closed form for backpropagation.

\subsection{Application to the Möbius Strip}
The Möbius strip is $([0,1] \times S^1) / \mathbb{Z}_2$ with deck transformation $\tau(h,\theta) = (1-h, \theta+\pi)$. Following Step~1 of the procedure, the covering space is a cylinder $[0,1] \times S^1$, so we select $f_1 = \cos\theta$, $f_2 = \sin\theta$ for the circular factor and $f_3 = h - \tfrac{1}{2}$ for the interval factor (centering at $\tfrac{1}{2}$ ensures that $\tau$ acts as a sign flip on $f_3$). Under $\tau$, all three basis elements flip sign: $\cos\theta \mapsto -\cos\theta$, $\sin\theta \mapsto -\sin\theta$, and $h - \tfrac{1}{2} \mapsto -(h-\tfrac{1}{2})$.

Since each basis element is odd under $\tau$, the Reynolds operator applied to any degree-1 monomial vanishes: $R[\cos\theta] = \tfrac{1}{2}(\cos\theta + (-\cos\theta)) = 0$, and likewise for $\sin\theta$ and $h - \tfrac{1}{2}$. At degree 2, products of two odd functions are even, so they survive. The non-redundant degree-2 invariants are:
\begin{align*}
R[\cos^2\theta] &= \cos^2\theta = \tfrac{1}{2}(1 + \cos 2\theta), \\
R[\sin^2\theta] &= \sin^2\theta = \tfrac{1}{2}(1 - \cos 2\theta), \\
R[\cos\theta \cdot \sin\theta] &= \cos\theta\sin\theta = \tfrac{1}{2}\sin 2\theta, \\
R[(h-\tfrac{1}{2})^2] &= (h - \tfrac{1}{2})^2, \\
R[\cos\theta \cdot (h-\tfrac{1}{2})] &= \cos\theta \cdot (h - \tfrac{1}{2}), \\
R[\sin\theta \cdot (h-\tfrac{1}{2})] &= \sin\theta \cdot (h - \tfrac{1}{2}).
\end{align*}
The first three are linearly dependent with $\{\cos 2\theta, \sin 2\theta, 1\}$ (and the constant 1 carries no information), leaving the five independent features~\eqref{eq:mobius_invariant}.

To verify orbit separation: from $\cos 2\theta$ and $\sin 2\theta$, the angle $\theta$ is determined up to $\theta$ or $\theta + \pi$. If $\theta_2 = \theta_1$, the cross terms force $h_2 = h_1$ (identical points). If $\theta_2 = \theta_1 + \pi$, the cross terms force $h_2 = 1 - h_1$, which is exactly the $\mathbb{Z}_2$ identification. So $\rho$ is injective on orbits.

\end{document}